\theoremstyle{definition}
\newtheorem{theorem}{Theorem}[]
\newtheorem{corollary}{Corollary}
\def\thm@space@setup{%
  \thm@preskip=\parskip \thm@postskip=0pt
}
\newcommand{\proj}{\text{Proj}}
\newcommand{\RR}{\mathbb{R}}
\newcommand{\SPD}{\mathbb{S}}
\newcommand{\xx}{\mathbf{x}}
\newcommand{\py}{\hat{y}}
\newcommand{\hh}{\mathbf{h}}
\newcommand{\veca}{\mathbf{a}}      
\DeclareMathOperator*{\argmax}{arg\,max}
\DeclarePairedDelimiterX{\inp}[2]{\langle}{\rangle}{#1, #2}
\newcommand{\vect}[1]{\text{vec}(#1)}
\newcommand{\diag}{\text{diag}}
\newcommand{\srank}{\text{srank}}
\newcommand{\logdet}{\log\det}
\DeclareMathOperator{\tr}{Tr}
\newcommand{\xspace}{\mathcal{X}}
\newcommand{\yspace}{\mathcal{Y}}
\newcommand{\trainingset}{\mathcal{D}}
\newcommand{\weight}{\mathbf{w}}
\newcommand{\hatheta}{\hat{\theta}}
\newcommand{\hatweight}{\hat{\weight}}
\newcommand{\normal}{\mathcal{N}}
\newcommand{\covr}{\Sigma_r}
\newcommand{\covc}{\Sigma_c}
\newcommand{\zerovector}{\mathbf{0}}
\newcommand{\prr}{\Omega_r}
\newcommand{\prc}{\Omega_c}
\newcommand{\defeq}{\vcentcolon=}
\newcommand{\algname}{\textsc{InvThreshold}}
\newcommand{\constraint}{\mathcal{C}}
\title{Learning Neural Networks with \\ Adaptive Regularization}
\author{
    Han Zhao\thanks{Equal contribution.}\,\,$^\dagger$, Yao-Hung Hubert Tsai$^{*\dagger}$, Ruslan Salakhutdinov$^\dagger$, Geoffrey J. Gordon$^{\dagger\ddagger}$ \\
    $^\dagger$Carnegie Mellon University, $^\ddagger$Microsoft Research Montreal\\
    \texttt{\{han.zhao,yaohungt,rsalakhu\}@cs.cmu.edu} \\ \texttt{geoff.gordon@microsoft.com}
}
\begin{document}

\maketitle

\begin{abstract}
Feed-forward neural networks can be understood as a combination of an intermediate representation and a linear hypothesis. While most previous works aim to diversify the representations, we explore the complementary direction by performing an adaptive and data-dependent regularization motivated by the empirical Bayes method. Specifically, we propose to construct a matrix-variate normal prior (on weights) whose covariance matrix has a Kronecker product structure. This structure is designed to capture the correlations in neurons through backpropagation. Under the assumption of this Kronecker factorization, the prior encourages neurons to borrow statistical strength from one another. Hence, it leads to an adaptive and data-dependent regularization when training networks on small datasets. To optimize the model, we present an efficient block coordinate descent algorithm with analytical solutions. Empirically, we demonstrate that the proposed method helps networks converge to local optima with smaller stable ranks and spectral norms. These properties suggest better generalizations and we present empirical results to support this expectation. We also verify the effectiveness of the approach on multiclass classification and multitask regression problems with various network structures. Our code is publicly available at:~\url{https://github.com/yaohungt/Adaptive-Regularization-Neural-Network}.
\end{abstract}

\section{Introduction}
\label{sec:intro}
Although deep neural networks have been widely applied in various domains~\citep{krizhevsky2009learning,lecun2015deep,he2016deep}, usually its parameters are learned via the principle of maximum likelihood, hence its success crucially hinges on the availability of large scale datasets. When training rich models on small datasets, explicit regularization techniques are crucial to alleviate overfitting. Previous works have explored various regularization~\citep{srivastava2014dropout} and data augmentation~\citep{simonyan2014very,he2016deep} techniques to learn diversified representations. In this paper, we look into an alternative direction by proposing an adaptive and data-dependent regularization method to encourage neurons of the same layer to share statistical strength through exploiting correlations between data and gradients. The goal of our method is to prevent overfitting when training (large) networks on small dataset. Our key insight stems from the famous argument by~\citet{efron2012large} in the literature of the empirical Bayes method: \emph{It is beneficial to learn from the experience of others}. The empirical Bayes methods provide us a guiding principle to learn model parameters even if we do not have complete information about prior distribution. From an algorithmic perspective, we argue that the connection weights of neurons in the same layer (row/column vectors of the weight matrix) will be correlated with each other through the backpropagation learning. Hence, by learning the correlations of the weight matrix, a neuron can ``borrow statistical strength'' from other neurons in the same layer, which essentially increases the effective sample size during learning.

As an illustrating example, consider a simple setting where the input $\xx\in\RR^d$ is fully connected to a hidden layer $\hh\in\RR^p$, which is further fully connected to the single output $\py\in\RR$. Let $\sigma(\cdot)$ be the nonlinear activation function, e.g., ReLU~\citep{nair2010rectified}, $W\in\RR^{p\times d}$ be the connection matrix between the input layer and the hidden layer, and $\veca\in\RR^p$ be the vector connecting the output and the hidden layer. Without loss of generality, ignoring the bias term in each layer, we have: $\py = \veca^T\hh, \hh = \sigma(W\xx)$. Consider using the usual $\ell_2$ loss function $\ell(\py, y) = \frac{1}{2}|\py - y|^2$ and take the derivative of $\ell(\py, y)$ w.r.t.\ $W$. We obtain the update formula in backpropagation as $W \gets W - \alpha(\py - y)(\veca \circ \hh')~\xx^T$, where $\hh'$ is the component-wise derivative of $\hh$ w.r.t.\ its input argument, and $\alpha > 0$ is the learning rate. Realize that $(\veca \circ \hh')~\xx^T$ is a rank 1 matrix, and the component of $\hh'$ is either 0 or 1. Hence, the update for each row vector of $W$ is linearly proportional to $\xx$. Similar observation also holds for each column vector of $W$, so it implies that the row/column vectors of $W$ are correlated with each other through learning. Although in this example we only discuss a one-hidden-layer network, it is straightforward to verify that the gradient update formula for general feed-forward networks admits the same rank one structure. The above observation leads us to the following question: 
\begin{quoting}
\itshape
Can we define a prior distribution over $W$ that captures the correlations through the learning process for better generalization?
\end{quoting}
\paragraph{Our Contributions}
To answer the above question, we develop an adaptive regularization method for neural nets inspired by the empirical Bayes method. Motivated by the example above, we propose a matrix-variate normal prior whose covariance matrix admits a Kronecker product structure to capture the correlations between different neurons. Using tools from convex analysis, we present an efficient block coordinate descent algorithm with closed-form solutions to optimize the model. Empirically, we show the proposed method helps the network converge to local optima with smaller stable ranks and spectral norms, and we verify the effectiveness of the approach on both multiclass classification and multitask regression problems with various network structures. 

\section{Preliminary}
\label{sec:preliminary}
\paragraph{Notation and Setup}
We use lowercase letter to represent scalar and lowercase bold letter to denote vector. Capital letter, e.g., $X$, is reserved for matrix. Calligraphic letter, such as $\mathcal{D}$, is used to denote set. We write $\tr(A)$ as the trace of a matrix $A$, $\det(A)$ as the determinant of $A$ and $\vect{A}$ as $A$'s vectorization by column. $[n]$ is used to represent the set $\{1, \ldots, n\}$ for any integer $n$. Other notations will be introduced whenever needed. Suppose we have access to a training set $\trainingset$ of $n$ pairs of data instances $(\xx_i, y_i), i\in[n]$. We consider the supervised learning setting where $\xx_i\in\xspace\subseteq\RR^d$ and $y_i\in\yspace$. 
Let $p(y\mid\xx, \weight)$ be the conditional distribution of $y$ given $\xx$ with parameter $\weight$. The parametric form of the conditional distribution is assumed be known. In this paper, we assume the model parameter $\weight$ is sampled from a prior distribution $p(\weight\mid\theta)$ with hyperparameter $\theta$. On the other hand, given $\trainingset$, the posterior distribution of $\weight$ is denoted by $p(\weight\mid \trainingset,\theta)$. 

\paragraph{The Empirical Bayes Method}
To compute the predictive distribution, we need access to the value of the hyperparameter $\theta$. However, complete information about the hyperparameter $\theta$ is usually not available in practice. To this end, empirical Bayes method~\citep{robbins1956empirical,efron1973stein,bernardo2001bayesian,gelman2013bayesian,efron2016computer} proposes to estimate $\theta$ from the data directly using the marginal distribution:
\begin{equation}
    \hatheta = \argmax_{\theta}~p(\trainingset\mid\theta) = \argmax_{\theta}\int p(\trainingset\mid\weight)\cdot p(\weight\mid\theta)~d\weight.
\label{equ:marginal}
\end{equation}
Under specific choice of the likelihood function $p(\xx, y\mid \weight)$ and the prior distribution $p(\weight\mid\theta)$, e.g., conjugate pairs, we can solve the above integral in closed form. In certain cases we can even obtain an analytic solution of $\hatheta$, which can then be plugged into the prior distribution. At a high level, by learning the hyperparameter $\theta$ in the prior distribution directly from data, the empirical Bayes method provides us a principled and data-dependent way to obtain an estimator of $\weight$. In fact, when both the prior and the likelihood functions are normal, it has been formally shown that the empirical Bayes estimators, e.g., the James-Stein estimator~\citep{james1961estimation} and the Efron-Morris estimator~\citep{efron1977stein}, dominate the classic maximum likelihood estimator (MLE) in terms of quadratic loss for every choice of the model parameter $\weight$. At a colloquial level, the success of the empirical Bayes method can be attributed to the effect of \emph{``borrowing statistical strength''}~\citep{efron2012large}, which also makes it a powerful tool in multitask learning~\citep{long2017learning,zhao2017efficient} and meta-learning~\citep{grant2018recasting}.

\section{Learning with Adaptive Regularization}
\label{sec:main}
In this section we first propose an adaptive regularization (AdaReg) method, which is inspired by the empirical Bayes method, for learning neural networks. We then combine our observation in Sec.~\ref{sec:intro} to develop an efficient adaptive learning algorithm with matrix-variate normal prior. Through our derivation, we provide several connections and interpretations with other learning paradigms.

\subsection{The Proposed Adaptive Regularization}
When the likelihood function $p(\trainingset\mid\weight)$ is implemented as a neural network, the marginalization in \eqref{equ:marginal} over model parameter $\weight$ cannot be computed exactly. Nevertheless, instead of performing expensive Monte-Carlo simulation, we propose to estimate both the model parameter $\weight$ and the hyperparameter $\theta$ in the prior simultaneously from the joint distribution $p(\trainingset,\weight\mid\theta) = p(\trainingset\mid\weight)\cdot p(\weight\mid \theta)$.
Specifically, given an estimate $\hatweight$ of the model parameter, by maximizing the joint distribution w.r.t.\ $\theta$, we can obtain $\hatheta$ as an approximation of the maximum marginal likelihood estimator. As a result, we can use $\hatheta$ to further refine the estimate $\hatweight$ by maximizing the posterior distribution as follows:
\begin{equation}
    \hatweight \gets \max_{\weight}~p(\weight\mid\trainingset) = \max_{\weight}~p(\trainingset\mid\weight)\cdot p(\weight\mid\hatheta).
\label{equ:posterior}
\end{equation}
The maximizer of \eqref{equ:posterior} can in turn be used in an updated joint distribution. Formally, we can define the following optimization problem that characterizes our Adaptive Regularization (AdaReg) framework:
\begin{equation}
    \max_{\weight}\max_{\theta}~ \log p(\trainingset\mid\weight) + \log p(\weight\mid\theta).
\label{equ:aeb}
\end{equation}
It is worth connecting the optimization problem \eqref{equ:aeb} to the classic maximum a posteriori (MAP) inference and also discuss their difference. If we drop the inner optimization over the hyperparameter $\theta$ in the prior distribution. Then for any fixed value $\hatheta$, \eqref{equ:aeb} reduces to MAP with the prior defined by the specific choice of $\hatheta$, and the maximizer $\hatweight$ corresponds to the mode of the posterior distribution given by $\hatheta$. From this perspective, the optimization problem in \eqref{equ:aeb} actually defines a series of MAP inference problems, and the sequence $\{\hatweight_j(\hatheta_j)\}_j$ defines a solution path towards the final model parameter. On the algorithmic side, the optimization problem \eqref{equ:aeb} also suggests a natural block coordinate descent algorithm where we alternatively optimize over $\weight$ and $\theta$ until the convergence of the objective function. An illustration of the framework is shown in Fig.~\ref{fig:AEB}. 
\begin{figure*}[t!]
\centering
\includegraphics[width=0.9\linewidth]{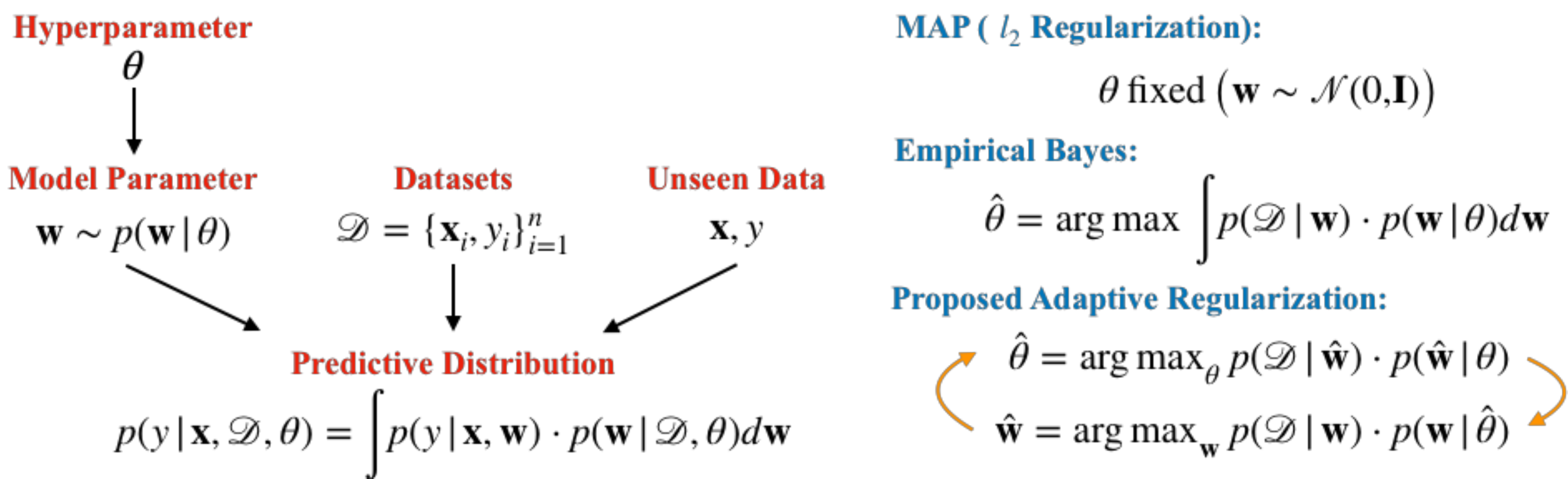}
\vspace{-2mm}
\caption{Illustration for Bayes/ Empirical Bayes, and our proposed adaptive regularization.}
\label{fig:AEB}
\vspace{-3mm}
\end{figure*}

\subsection{Neural Network with Matrix-Normal Prior}
Inspired by the observation from Sec.~\ref{sec:intro}, we propose to define a matrix-variate normal distribution~\citep{gupta2018matrix} over the connection weight matrix $W$: $W \sim \mathcal{MN}(0_{p\times d}, \covr, \covc)$, where $\covr\in\SPD_{++}^p$ and $\covc\in\SPD_{++}^d$ are the row and column covariance matrices, respectively.\footnote{The probability density function is given by
$p(W\mid\covr, \covc) = \frac{\exp\left(-\tr(\covr^{-1}W\covc^{-1}W^T)/2\right)}{(2\pi)^{pd/2}\det(\covr)^{d/2}\det(\covc)^{p/2}}$.}
Equivalently, one can understand the matrix-variate normal distribution over $W$ as a multivariate normal distribution with a Kronecker product covariance structure over $\vect{W}$: $\vect{W} \sim \normal(0_{p\times d}, \covc\otimes \covr)$. It is then easy to check that the marginal prior distributions over the row and column vectors of $W$ are given by:
\begin{equation*}
    W_{i:}\sim\normal(\zerovector_d, [\covr]_{ii}\cdot\covc),\quad W_{:j}\sim\normal(\zerovector_p, [\covc]_{jj}\cdot\covr).
\end{equation*}
We point out that the Kronecker product structure of the covariance matrix exactly captures our prior about the connection matrix $W$: the fan-in/fan-out of neurons in the same layer (row/column vectors of $W$) are correlated with the same correlation matrix in the prior, and they only differ at the scales. 

For illustration purpose, let us consider the simple feed-forward network discussed in Sec.~\ref{sec:intro}. Consider a reparametrization of the model by defining $\prr\defeq \covr^{-1}$ and $\prc\defeq\covc^{-1}$ to be the corresponding precision matrices and plug in the prior distribution into the our AdaReg framework (see~\eqref{equ:aeb}). After routine algebraic simplifications, we reach the following concrete optimization problem:
\begin{align}
    & \min_{W,\veca}\min_{\prr, \prc} && \frac{1}{2n}\sum_{i\in[n]}(\py(\xx_i; W,\veca) - y_i)^2 + \lambda ||\prr^{1/2}W\prc^{1/2}||_F^2 -\lambda\big(d\logdet(\prr) + p\logdet(\prc)\big) \nonumber\\
    & \text{subject to} && u I_p \preceq \prr\preceq vI_p,~ u I_d\preceq \prc\preceq v I_d
\label{equ:opt}
\end{align}
where $\lambda$ is a constant that only depends on $p$ and $d$, $0 < u\leq v$ and $uv = 1$. Note that the constraint is necessary to guarantee the feasible set to be compact so that the optimization problem is well formulated and a minimum is attainable.~\footnote{The constraint $uv = 1$ is only for the ease of presentation in the following part and can be readily removed.} It is not hard to show that in general the optimization problem \eqref{equ:opt} is not jointly convex in terms of $\{\veca, W, \prr, \prc\}$, and this holds even if the activation function is linear. However, as we will show later, for any fixed $\veca, W$, the reparametrization makes the partial optimization over $\prr$ and $\prc$ bi-convex. More importantly, we can derive an efficient algorithm that finds the optimal $\prr (\prc)$ for any fixed $\veca, W, \prc (\prr)$ in $O(\max\{d^3, p^3\})$ time with closed form solutions. This allows us to apply our algorithm to networks of large sizes, where a typical hidden layer can contain thousands of nodes. Note that this is in contrast to solving a general semi-definite programming (SDP) problem using black-box algorithm, e.g., the interior-point method~\citep{mehrotra1992implementation}, which is computationally intensive and hard to scale to networks with moderate sizes. 
Before we delve into the details on solving~\eqref{equ:opt}, it is instructive to discuss some of its connections and differences to other learning paradigms. 

\textbf{Maximum-A-Posteriori Estimation}. Essentially, for model parameter $W$,~\eqref{equ:opt} defines a sequence of MAP problems where each MAP is indexed by the pair of precision matrices $(\prr^{(t)}, \prc^{(t)})$ at iteration $t$. Equivalently, at each stage of the optimization, we can interpret~\eqref{equ:opt} as placing a matrix variate normal prior on $W$ where the precision matrix in the prior is given by $\prr^{(t)}\otimes\prc^{(t)}$. From this perspective, if we fix $\prr^{(t)} = I_p$ and $\prc^{(t)} = I_d$, $\forall t$, then~\eqref{equ:opt} naturally reduces to learning with $\ell_2$ regularization~\citep{krogh1992simple}. More generally, for non-diagonal precision matrices, the regularization term for $W$ becomes:
\begin{equation*}
    ||\prr^{1/2}W\prc^{1/2}||_F^2 = ||\vect{\prr^{1/2}W\prc^{1/2}}||_2^2 = ||(\prc^{1/2}\otimes \prr^{1/2})~\vect{W}||_2^2,
\end{equation*}
and this is exactly the Tikhonov regularization~\citep{golub1979generalized} imposed on $W$ where the Tikhonov matrix $\Gamma$ is given by $\Gamma\defeq \prc^{1/2}\otimes \prr^{1/2}$. But instead of manually designing the regularization matrix $\Gamma$ to improve the conditioning of the estimation problem, we propose to also learn both precision matrices (so $\Gamma$ as well) from data. From an algorithmic perspective, $\Gamma^T\Gamma = \prc\otimes\prr$ serves as a preconditioning matrix w.r.t.\ model parameter $W$ to reshape the gradient according to the geometry of the data~\citep{gupta2017unified,hazan2007logarithmic,duchi2011adaptive}.


\textbf{Volume Minimization}. Let us consider the $\log\det(\cdot)$ function over the positive definite cone. It is well known that the log-determinant function is concave~\citep{boyd2004convex}. Hence for any pair of matrices $A_1, A_2\in\SPD_{++}^m$, the following inequality holds:
\begin{equation}
    \log\det(A_1) \leq \log\det(A_2) + \langle \nabla \log\det(A_2), A_1 - A_2\rangle = \log\det(A_2) + \tr(A_2^{-1}A_1) - m.
\label{equ:det}
\end{equation}
Applying the above inequality twice by fixing $A_1 = W\prc W^T/2d, A_2 = \covr$ and $A_1 = W^T\prr W/2p, A_2 = \covc$ respectively leads to the following inequalities:
\begin{align*}
    d\log\det(W\prc W^T/2d) & \leq -d\log\det(\prr) + \frac{1}{2}\tr(\prr W\prc W^T) - dp, \\
    p\log\det(W^T\prr W/2p) & \leq -p\log\det(\prc) + \frac{1}{2}\tr(\prr W\prc W^T) - dp.
\end{align*}
Realize $\tr(\prr W\prc W^T) = ||\prr^{1/2}W\prc^{1/2}||_F^2$. Summing the above two inequalities leads to:
\begin{equation}
\small
d\log\det(W\prc W^T) + p\log\det(W^T\prr W) \leq ||\prr^{1/2}W\prc^{1/2}||_F^2 - \big(d\log\det(\prr) + p\log\det(\prc)\big) + c,
\label{equ:bound}
\end{equation}
where $c$ is a constant that only depends on $d$ and $p$. Recall that $|\det(A^TA)|$ computes the squared volume of the parallelepiped spanned by the column vectors of $A$. Hence \eqref{equ:bound} gives us a natural interpretation of the objective function in~\eqref{equ:opt}: the regularizer essentially upper bounds the log-volume of the two parallelpipeds spanned by the row and column vectors of $W$. But instead of measuring the volume using standard Euclidean inner product, it also takes into account the local curvatures defined by $\covr$ and $\covc$, respectively. For vectors with fixed lengths, the volume of the parallelepiped spanned by them becomes smaller when they are more linearly correlated, either positively or negatively. At a colloquial level, this means that the regularizer in~\eqref{equ:opt} forces fan-in/fan-out of neurons at the same layer to be either positively or negatively correlated with each other, and this corresponds exactly to the effect of sharing statistical strengths.


\subsection{The Algorithm}
In this section we describe a block coordinate descent algorithm to optimize the objective function in~\eqref{equ:opt} and detail how to efficiently solve the matrix optimization subproblems in closed form using tools from convex analysis. Due to space limit, we defer proofs and detailed derivation to appendix. Given a pair of constants $0 < u \leq v$, we define the following thresholding function $\mathbb{T}_{[u,v]}(x)$: 
\begin{equation}
\mathbb{T}_{[u,v]}(x) \defeq \max\{u, \min\{v, x\}\}.
\label{equ:thresholding}
\end{equation}
We summarize our block coordinate descent algorithm to solve \eqref{equ:opt} in Alg.~\ref{alg:bcd}. In each iteration, Alg.~\ref{alg:bcd} takes a first-order algorithm $\mathfrak{A}$, e.g., the stochastic gradient descent, to optimize the parameters of the neural network by backpropagation. It then proceeds to compute the optimal solutions for $\prr$ and $\prc$ using \algname~ as a sub-procedure. Alg.~\ref{alg:bcd} terminates when a stationary point is found.  

We now proceed to show that the procedure \algname~ finds the optimal solution given all the other variables fixed. Due to the symmetry between $\prr$ and $\prc$ in~\eqref{equ:opt}, we will only prove this for $\prr$, and similar arguments can be applied to $\prc$ as well. Fix both $W$, $\prc$ and ignore all the terms that do not depend on $\prr$, the sub-problem on optimizing $\prr$ becomes:
\begin{align}
& \min_{\prr} \quad \tr(\prr W\prc W^T)  - d\logdet(\prr),\qquad\text{subject to}\quad u I_p \preceq \prr\preceq vI_p.
\label{equ:matopt}
\end{align}
It is not hard to show that the optimization problem~\eqref{equ:matopt} is convex. Define the constraint set 
$
\constraint\defeq \{A\in \SPD_{++}^p\mid uI_p\preceq A\preceq vI_p\}
$
and the indicator function $\mathbb{I}_{\constraint}(A) = 0$ iff $A\in \constraint$ else $\infty$. Given the convexity of~\eqref{equ:matopt}, we can use the indicator function to first transform~\eqref{equ:matopt} into the following unconstrained one: 
\begin{align}
& \min_{\prr} \quad \tr(\prr W\prc W^T)  - d\logdet(\prr) + \mathbb{I}_{\constraint}(\prr).
\label{equ:matopt}
\end{align}
Then we can use the first-order optimality condition to characterize the optimal solution:
\begin{align*}
    0 \in \partial\left(\frac{1}{d}\tr(\prr W\prc W^T)  - \logdet(\prr) + \mathbb{I}_{\constraint}(\prr)\right) = W\prc W^T/d - \prr^{-1} + \mathcal{N}_{\constraint}(\prr),
\end{align*}
where $\mathcal{N}_{\constraint}(A)\defeq\{B\in\SPD^p\mid \tr(B^T(Z - A))\leq 0, \forall Z\in\constraint\}$ is the normal cone w.r.t.\  $\constraint$ at $A$. The following key lemma characterizes the structure of the normal cone:
\begin{restatable}{lemma}{key}
\label{lemma:key}
Let $\prr\in \constraint$, then $\mathcal{N}_{\constraint}(\prr) = -\mathcal{N}_{\constraint}(\prr^{-1})$.
\end{restatable}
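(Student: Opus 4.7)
My plan is to compute $\mathcal{N}_\constraint(\prr)$ explicitly and then observe that the condition $uv = 1$ forces matrix inversion to swap the two pieces that appear in that description. I would start by decomposing
\[
\constraint = \{A \in \SPD^p : A \succeq uI_p\} \cap \{A \in \SPD^p : A \preceq vI_p\},
\]
both shifted copies of the PSD cone. Each has nonempty interior (for $u<v$), so Slater's condition holds and the sum rule for normal cones gives
\[
\mathcal{N}_\constraint(\prr) = \mathcal{N}_{\{A \succeq uI_p\}}(\prr) + \mathcal{N}_{\{A \preceq vI_p\}}(\prr).
\]
Each summand is the standard normal cone to a shifted PSD cone: $B_1 \in \mathcal{N}_{\{A \succeq uI_p\}}(\prr)$ iff $B_1 \preceq 0$ and $\tr(B_1(\prr-uI_p)) = 0$, and $B_2 \in \mathcal{N}_{\{A \preceq vI_p\}}(\prr)$ iff $B_2 \succeq 0$ and $\tr(B_2(vI_p-\prr)) = 0$. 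By complementary slackness for PSD pairs, these trace conditions force $\mathrm{range}(B_1) \subseteq E_u(\prr) := \ker(\prr - uI_p)$ and $\mathrm{range}(B_2) \subseteq E_v(\prr) := \ker(vI_p - \prr)$; in particular $B_1$ and $B_2$ commute with $\prr$ and are supported on its $u$- and $v$-eigenspaces respectively.

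This yields the clean characterization: $B \in \mathcal{N}_\constraint(\prr)$ iff $B = B_1 + B_2$ with $B_1 \preceq 0$ supported on $E_u(\prr)$ and $B_2 \succeq 0$ supported on $E_v(\prr)$. Since $\prr$ and $\prr^{-1}$ share eigenvectors and $uv = 1$, the eigenvalue bijection $\lambda \mapsto 1/\lambda$ sends $u$ to $v$ and vice versa, so $E_u(\prr^{-1}) = E_v(\prr)$ and $E_v(\prr^{-1}) = E_u(\prr)$. Applying the same characterization at $\prr^{-1}$ and negating, $-\mathcal{N}_\constraint(\prr^{-1})$ is the set of all $-(C_1 + C_2)$ with $-C_1 \succeq 0$ supported on $E_u(\prr^{-1}) = E_v(\prr)$ and $-C_2 \preceq 0$ supported on $E_v(\prr^{-1}) = E_u(\prr)$. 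After relabeling, this is exactly the description of $\mathcal{N}_\constraint(\prr)$, which establishes the lemma.

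The step I expect to require the most care is the complementary-slackness deduction that $B_1 \preceq 0$ and $\tr(B_1(\prr - uI_p)) = 0$ force $\mathrm{range}(B_1) \subseteq E_u(\prr)$. The argument is standard but worth spelling out: factor $\prr - uI_p = PP^T$, so $\tr(P^T(-B_1)P) = 0$ with $P^T(-B_1)P \succeq 0$, whence $P^T(-B_1)P = 0$; this gives $B_1 P = 0$, so $\mathrm{range}(B_1) \subseteq \ker(P^T) = E_u(\prr)$. The symmetric argument handles $B_2$. Once this is nailed down, the eigenspace swap under inversion makes the two normal-cone descriptions identical (up to an overall sign), and $\mathcal{N}_\constraint(\prr) = -\mathcal{N}_\constraint(\prr^{-1})$ follows.
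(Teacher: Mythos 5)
Your proof is correct, and it takes a genuinely different route from the one in the appendix. The paper argues directly from the definition: a normal $S$ at $\prr$ makes $\prr$ a maximizer of the linear program $\max_{Z\in\constraint}\tr(SZ)$, which it then analyzes via the eigendecompositions of $S$ and $Z$, the doubly stochastic matrix $K\circ K$, and a rearrangement-type chain of trace inequalities, deducing which spectra can be optimal. You instead split $\constraint$ into the two shifted cones $\{A\succeq uI_p\}$ and $\{A\preceq vI_p\}$, apply the normal-cone sum rule for the intersection, and use complementary slackness to show each summand is sign-definite and supported on the $u$- or $v$-eigenspace of $\prr$; the hypothesis $uv=1$ then swaps those two eigenspaces under $\prr\mapsto\prr^{-1}$ and flips the signs, which is exactly the claim. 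Your route buys an explicit, exhaustive description of $\mathcal{N}_{\constraint}(\prr)$ that covers every case uniformly --- interior points (cone $=\{0\}$), normals at the lower face, at the upper face, and mixtures --- and it makes transparent precisely where $uv=1$ enters. The paper's argument is more elementary (only trace inequalities and Birkhoff--von Neumann), but its inequality chain implicitly assumes sign conditions on the eigenvalues of $S$, and its conclusion that all eigenvalues of $\prr$ equal $v$ only describes the extreme configuration; your case analysis is the more complete treatment. Two small points to tidy up: dispatch the degenerate case $u=v$ separately (with $uv=1$ it forces $u=v=1$, so $\constraint=\{I_p\}$, the normal cone is all of $\SPD^p$, and $\prr=\prr^{-1}=I_p$, making the identity trivial), and note that the constraint qualification you need is nonemptiness of the intersection of the two interiors --- witnessed by $\tfrac{u+v}{2}I_p$ --- rather than of each interior separately.
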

Equivalently, combining Lemma~\ref{lemma:key} with the optimality condition, we have 
\begin{equation*}
    W\prc W^T/d - \prr^{-1}\in \mathcal{N}_{\constraint}(\prr^{-1}).
\end{equation*}
Geometrically, this means that the optimum $\prr^{-1}$ is the Euclidean projection of $W\prc W^T/d$ onto $\constraint$. Hence in order to solve~\eqref{equ:matopt}, it suffices if we can solve the following Euclidean projection problem efficiently, where $\widetilde{\prr}\in\SPD^p$ is a given real symmetric matrix:
\begin{align}
    \min_{\prr} \quad ||\prr - \widetilde{\prr}||_F^2,\qquad\text{subject to} \quad u I_p \preceq \prr\preceq vI_p.
\label{equ:projection}
\end{align}
Perhaps a little bit surprising, we can find the optimal solution to the above Euclidean projection problem efficiently in closed form:
\begin{restatable}{theorem}{theo}
Let $\widetilde{\prr}\in \SPD^p$ with eigendecomposition as $\widetilde{\prr} = Q\Lambda Q^T$ and $\proj_{\constraint}(\cdot)$ be the Euclidean projection operator onto $\constraint$, then $\proj_{\constraint}(\widetilde{\prr}) = Q\mathbb{T}_{[u,v]}(\Lambda)Q^T$. 
\end{restatable}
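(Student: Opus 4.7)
The plan is to reduce the matrix projection to $p$ independent scalar projections by first showing that an optimal $\prr^\star$ must be diagonalized by the same orthogonal basis $Q$ as $\widetilde{\prr}$, and then optimizing the eigenvalues one by one. I would begin by expanding
\[
\|\prr - \widetilde{\prr}\|_F^2 = \tr(\prr^2) - 2\tr(\prr\widetilde{\prr}) + \tr(\widetilde{\prr}^2).
\]
The last term is a constant, and $\tr(\prr^2) = \sum_i \sigma_i^2$ depends only on the eigenvalues $\sigma_1 \geq \cdots \geq \sigma_p$ of $\prr$, which are constrained to lie in $[u,v]$. Hence, for any prescribed eigenvalues $\{\sigma_i\}$, minimizing the Frobenius distance over feasible $\prr$ with those eigenvalues is equivalent to maximizing the inner product $\tr(\prr\widetilde{\prr})$.

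Next, I would invoke von Neumann's trace inequality (or equivalently the symmetric Hoffman--Wielandt inequality): for any symmetric $A,B\in\SPD^p$ with eigenvalues $\alpha_1\geq\cdots\geq\alpha_p$ and $\beta_1\geq\cdots\geq\beta_p$, $\tr(AB)\leq \sum_i \alpha_i\beta_i$, with equality precisely when $A$ and $B$ admit a common orthonormal eigenbasis that aligns the eigenvalues in matching sorted order. Applied to $\prr$ and $\widetilde{\prr} = Q\Lambda Q^T$ (with $\Lambda$ sorted in decreasing order), this implies that the maximum of $\tr(\prr\widetilde{\prr})$ over feasible $\prr$ with eigenvalues $\sigma_1\geq\cdots\geq\sigma_p$ is attained at $\prr = Q\Sigma Q^T$ with $\Sigma = \diag(\sigma_1,\ldots,\sigma_p)$.

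With this alignment of eigenvectors, the orthogonal invariance of the Frobenius norm gives
\[
\|\prr - \widetilde{\prr}\|_F^2 = \|Q(\Sigma - \Lambda)Q^T\|_F^2 = \sum_{i=1}^p (\sigma_i - \lambda_i)^2,
\]
which under the constraint $u \leq \sigma_i \leq v$ fully decouples into $p$ one-dimensional problems. Each is minimized by the scalar projection $\sigma_i^\star = \mathbb{T}_{[u,v]}(\lambda_i)$. Stacking these yields $\proj_{\constraint}(\widetilde{\prr}) = Q\,\mathbb{T}_{[u,v]}(\Lambda)\,Q^T$, and since $u>0$ the minimizer automatically lies in $\SPD_{++}^p$.

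The main obstacle is the equality-condition argument in von Neumann's inequality; once this is in hand, the remainder is routine. A minor technicality worth flagging is repeated eigenvalues of $\widetilde{\prr}$, where $Q$ is not unique. Since $\mathbb{T}_{[u,v]}$ is constant on each eigenspace, the product $Q\,\mathbb{T}_{[u,v]}(\Lambda)\,Q^T$ is invariant under this ambiguity, so the result is well-defined regardless of which eigenbasis is chosen.
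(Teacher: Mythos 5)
Your proof is correct, and it takes a genuinely different route from the paper's. The paper's argument writes $\prr = U\Lambda_{\prr}U^T$, uses unitary invariance to reduce the objective to $\|\Lambda_{\prr} - U^T\widetilde{\prr}U\|_F^2$, drops the off-diagonal terms of $B = U^T\widetilde{\prr}U$, and then clips the diagonal; the alignment $U = Q$ is identified as the point where both inequalities in that chain become equalities. You instead expand $\|\prr - \widetilde{\prr}\|_F^2 = \tr(\prr^2) - 2\tr(\prr\widetilde{\prr}) + \tr(\widetilde{\prr}^2)$, fix the spectrum of $\prr$, and invoke von Neumann's trace inequality (with its equality condition) to show the optimal eigenbasis is $Q$ with eigenvalues matched in sorted order, after which the problem decouples into scalar clippings. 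What your route buys is a cleaner separation of the eigenbasis optimization from the eigenvalue optimization: the bound $\tr(\prr\widetilde{\prr}) \leq \sum_i \sigma_i\lambda_i$ holds uniformly over all feasible $\prr$ with the prescribed spectrum, whereas the paper's final lower bound $\sum_i(\mathbb{T}_{[u,v]}(B_{ii}) - B_{ii})^2$ still depends on $U$ through $\diag(B)$ (by Schur--Horn, $\diag(B)$ is majorized by the spectrum of $\widetilde{\prr}$, and since $x \mapsto (\mathbb{T}_{[u,v]}(x)-x)^2$ is convex this $U$-dependent bound can be \emph{smaller} at $U \neq Q$ than at $U = Q$), so achieving equality there does not by itself certify global optimality; your version closes that loophole. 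The price is that you lean on the equality case of von Neumann's inequality, which you rightly flag as the one nontrivial ingredient, though it is classical. One small point worth making explicit: after aligning the eigenbasis, the eigenvalues of $\prr$ inherit the decreasing order used in the trace inequality, so the decoupled minimizers $\sigma_i^\star = \mathbb{T}_{[u,v]}(\lambda_i)$ must be checked to respect that ordering --- they do, because $\mathbb{T}_{[u,v]}$ is monotone nondecreasing and $\Lambda$ is sorted, so the box relaxation is tight. Your remark about non-uniqueness of $Q$ under repeated eigenvalues is a nice touch that the paper's own proof glosses over.
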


\begin{corollary}
Let $W\prc W^T$ be eigendecomposed as $Q\diag(\mathbf{r})Q^T$, then the optimal solution to~\eqref{equ:matopt} is given by $Q\mathbb{T}_{[u,v]}(d/\mathbf{r})Q^T$.
\end{corollary}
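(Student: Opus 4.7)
The plan is to chain together the three ingredients that the preceding discussion has already established: (i) the first-order optimality condition which identified $\prr^{-1}$ as the Euclidean projection of $W\prc W^T/d$ onto $\constraint$, (ii) the closed-form formula for this projection given by the preceding Theorem, and (iii) the constraint $uv=1$ that links the thresholding to its inverse.

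First, I would start from the optimality characterization derived just above: any minimizer $\prr$ of the constrained problem~\eqref{equ:matopt} satisfies $\prr^{-1}=\proj_{\constraint}(W\prc W^T/d)$. Substituting the given eigendecomposition $W\prc W^T = Q\diag(\mathbf{r})Q^T$ yields $W\prc W^T/d = Q\diag(\mathbf{r}/d)Q^T$, which is already in eigen-form. Invoking the Theorem with $\widetilde{\prr}=W\prc W^T/d$ and $\Lambda = \diag(\mathbf{r}/d)$ gives
\begin{equation*}
\prr^{-1} = Q\,\mathbb{T}_{[u,v]}\!\bigl(\diag(\mathbf{r}/d)\bigr)\,Q^T.
\end{equation*}
Because $Q$ is orthogonal and the thresholded middle factor is diagonal, inverting both sides produces $\prr = Q\,\mathbb{T}_{[u,v]}(\diag(\mathbf{r}/d))^{-1}Q^T$, so all that remains is to rewrite the diagonal factor as $\mathbb{T}_{[u,v]}(d/\mathbf{r})$.

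The one step requiring genuine verification is the identity $\mathbb{T}_{[u,v]}(x)^{-1}=\mathbb{T}_{[u,v]}(1/x)$ for positive $x$, under the standing assumption $uv=1$. I would handle this by a three-case check on each eigenvalue $r_i/d$: if $r_i/d\in[u,v]$ then the assumption $uv=1$ forces $d/r_i\in[1/v,1/u]=[u,v]$, so both sides equal $d/r_i$; if $r_i/d>v$ then the left side equals $1/v=u$, while $d/r_i<u$ forces the right side to equal $u$ as well; the case $r_i/d<u$ is symmetric. Applying this entrywise to $\diag(\mathbf{r}/d)$ converts $\mathbb{T}_{[u,v]}(\diag(\mathbf{r}/d))^{-1}$ into $\mathbb{T}_{[u,v]}(\diag(d/\mathbf{r}))$, completing the identification of the optimum as $Q\,\mathbb{T}_{[u,v]}(d/\mathbf{r})\,Q^T$.

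The only potential obstacle is the harmless but easy-to-overlook case $r_i=0$, i.e.\ when $W\prc W^T$ is singular. In that case $d/r_i=+\infty$ and $\mathbb{T}_{[u,v]}(+\infty)=v$; correspondingly $\mathbb{T}_{[u,v]}(0)=u$ with reciprocal $1/u=v$, so the identity still holds with the usual conventions. This is worth a one-line remark but does not alter the argument. No other technical subtleties arise, since convexity of~\eqref{equ:matopt}, uniqueness of the projection onto the compact convex set $\constraint$, and the explicit Theorem have all been supplied earlier in the section.
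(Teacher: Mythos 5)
Your proposal is correct and follows exactly the route the paper intends: the normal-cone optimality condition identifies $\prr^{-1}$ as $\proj_{\constraint}(W\prc W^T/d)$, the projection theorem gives its eigenvalue form, and the identity $\mathbb{T}_{[u,v]}(x)^{-1}=\mathbb{T}_{[u,v]}(1/x)$ under $uv=1$ converts the inverse back to the stated thresholding of $d/\mathbf{r}$. Your explicit three-case verification of that reciprocal identity (and the $r_i=0$ remark) is a welcome bit of care that the paper leaves implicit.
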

Similar arguments can be made to derive the solution for $\prc$ in~\eqref{equ:opt}. The final algorithm is very simple as it only contains one SVD, hence its time complexity is $O(\max\{d^3, p^3\})$. Note that the total number of parameters in the network is at least $\Omega (dp)$, hence the algorithm is efficient as it scales sub-quadratically in terms of number of parameters in the network.
\begin{algorithm}[tb]
\centering
\small
\caption{Block Coordinate Descent for Adaptive Regularization}
\label{alg:bcd}
\begin{algorithmic}[1]
\Require Initial value $\phi^{(0)}\defeq\{\veca^{(0)}, W^{(0)}\}$, $\prr^{(0)}\in\SPD_{++}^p$ and $\prc^{(0)}\in\SPD_{++}^d$, first-order optimization algorithm $\mathfrak{A}$.
\For {$t = 1,\ldots,\infty$ until convergence}
    \State  Fix $\prr^{(t-1)}$, $\prc^{(t-1)}$, optimize $\phi^{(t)}$ by backpropagation and algorithm $\mathfrak{A}$ 
    \State  $\prr^{(t)}\gets \algname(W^{(t)}\prc^{(t-1)}W^{(t)T}, d, u, v)$
    \State  $\prc^{(t)}\gets \algname(W^{(t)T}\prr^{(t)}W^{(t)}, p, u, v)$
\EndFor
\Procedure{\algname}{$\Delta, m, u, v$}
    \State Compute SVD: $Q\diag(\mathbf{r})Q^T = \text{SVD}(\Delta)$
    \State Hard thresholding $\mathbf{r}'\gets \mathbb{T}_{[u,v]}(m/\mathbf{r})$
    \State \textbf{return} $Q\diag(\mathbf{r}')Q^T$
\EndProcedure
\end{algorithmic}
\end{algorithm}

\section{Experiments}
\label{sec:experimnets}
In this section we demonstrate the effectiveness of AdaReg in learning practical deep neural networks on real-world datasets. We report generalization, optimization as well as stability results. 

\subsection{Experimental Setup}
\textbf{Multiclass Classification (MNIST \& CIFAR10)}: In this experiment, we show that AdaReg provides an effective regularization on the network parameters. To this end, we use a convolutional neural network as our baseline model. To show the effect of regularization, we gradually increase the training set size. In MNIST we use the step from 60 to 60,000 (11 different experiments) and in CIFAR10 we consider the step from 5,000 to 50,000 (10 different experiments). For each training set size, we repeat the experiments for 10 times. The mean along with its standard deviation are shown as the statistics. Moreover, since both the optimization and generalization of neural networks are sensitive to the size of minibatches~\citep{keskar2016large,goyal2017accurate}, we study two minibatch settings for 256 and 2048, respectively. In our method, we place a matrix-variate normal prior over the weight matrix of the last softmax layer, and we use Alg.~\ref{alg:bcd} to optimize both the model weights and two covariance matrices. 

\textbf{Multitask Regression (SARCOS)}: SARCOS relates to an inverse dynamics problem for a seven degree-of-freedom (DOF) SARCOS anthropomorphic robot arm~\citep{vijayakumar2000locally}. The goal of this task is to map from a 21-dimensional input space (7 joint positions, 7 joint velocities, 7 joint accelerations) to the corresponding 7 joint torques. Hence there are 7 tasks and the inputs are shared among all the tasks. The training set and test set contain 44,484 and 4,449 examples, respectively. Again, we apply AdaReg on the last layer weight matrix, where each row corresponds to a separate task vector.

We compare AdaReg with classic regularization methods in the literature, including weight decay, dropout~\citep{srivastava2014dropout}, batch normalization (BN)~\citep{ioffe2015batch} and the DeCov method~\citep{cogswell2015reducing}. We also note that we fix all the hyperparameters such as learning rate to be the same for all the methods. We report evaluation metrics on test set as a measure of generalization. To understand how the proposed adaptive regularization helps in optimization, we visualize the trajectory of the loss function during training. Lastly, we also present the inferred correlation of the weight matrix for qualitative study.

\subsection{Results and Analysis}
\textbf{Multiclass Classification (MNIST \& CIFAR10)}: Results on the multiclass classification for different training sizes are show in Fig.~\ref{fig:multiclass}. For both MNIST and CIFAR10, we find 
AdaReg, Weight Decay, and Dropout are the effective regularization methods, while Batch Normalization and DeCov vary in different settings. Batch Normalization suffers from large batch size in CIFAR10 (comparing Fig.~\ref{fig:multiclass} (c) and (d)) but is not sensitive to batch size in MNIST (comparing Fig.~\ref{fig:multiclass} (a) and (b)). The performance deterioration in large batch size of Batch Normalization is also observed by~\cite{hoffer2017train}. DeCov, on the other hand, improves the generalization in MNIST with batch size 256 (see Fig.~\ref{fig:multiclass} (a)), while it demonstrates only comparable or even worse performance in other settings. To conclude, as training set size grows, AdaReg consistently performs better generalization as comparing to other regularization methods. We also note that AdaReg is not sensitive to the size of minibatches while most of the methods suffer from large minibatches. In appendix, we show the combination of AdaReg with other generalization methods can usually lead to even better results. 
\begin{table*}[t!]
\small
    \centering
    \caption{Explained variance of different methods on 7 regression tasks from the SARCOS dataset.}
    \begin{tabular}{l|*7c}\toprule
    \textbf{Method} & 1st & 2nd & 3rd & 4th & 5th & 6th & 7th \\\midrule
    \textbf{MTL} & 0.4418 & 0.3472 & 0.5222 & 0.5036 & 0.6024 & 0.4727 & 0.5298 \\
    \textbf{MTL-Dropout} & 0.4413 & 0.3271 & 0.5202 & 0.5063 & 0.6036 & 0.4711 & 0.5345 \\
    \textbf{MTL-BN} & 0.4768 & 0.3770 & 0.5396 & 0.5216 & 0.6117 & 0.4936 & 0.5479 \\
    \textbf{MTL-DeCoV} & 0.4027 & 0.3137 & 0.4703 & 0.4515 & 0.5229 & 0.4224 & 0.4716 \\
    \textbf{MTL-AdaReg} & \textbf{0.4769} & \textbf{0.3969} & \textbf{0.5485} & \textbf{0.5308} & \textbf{0.6202} & \textbf{0.5085} & \textbf{0.5561} \\\bottomrule
    \end{tabular}
    \label{tab:sarcos}
\end{table*}

\textbf{Multitask Regression (SARCOS)}: In this experiment we are interested in investigating whether AdaReg can lead to better generalization for multiple related regression problems. To do so, we report the explained variance as a normalized metric, e.g., one minus the ratio between mean squared error and the variance of different methods in Table~\ref{tab:sarcos}. The larger the explained variance, the better the predictive performance. In this case we observe a consistent improvement of AdaReg over other competitors on all the 7 regression tasks. We would like to emphasize that all the experiments share exactly the same experimental protocol, including network structure, optimization algorithm, training iteration, etc, so that the performance differences can only be explained by different ways of regularizations. For better visualization, we also plot the result in appendix.

\textbf{Optimization}: It has recently been empirically shown that BN helps optimization not by reducing internal covariate shift, but instead by smoothing the landscape of the loss function~\citep{santurkar2018does}. To understand how AdaReg improves generalization, in Fig.~\ref{fig:mnist_opt}, we plot the values of the cross entropy loss function on both the training and test sets during optimization using Alg.~\ref{alg:bcd}. The experiment is performed in MNIST with batch size 256/2048. In this experiment, we fix the number of outer loop to be 2/5 and each block optimization over network weights contains 50 epochs. Because of the stochastic optimization over model weights, we can see several unstable peaks in function value around iteration 50 when trained with AdaReg, which corresponds to the transition phase between two consecutive outer loops with different row/column covariance matrices. In all the cases AdaReg converges to better local optima of the loss landscape, which lead to better generalization on the test set as well because they have smaller loss values on the test set when compared with training without AdaReg.
\begin{figure*}[t!]
    \centering
    \includegraphics[width=\linewidth]{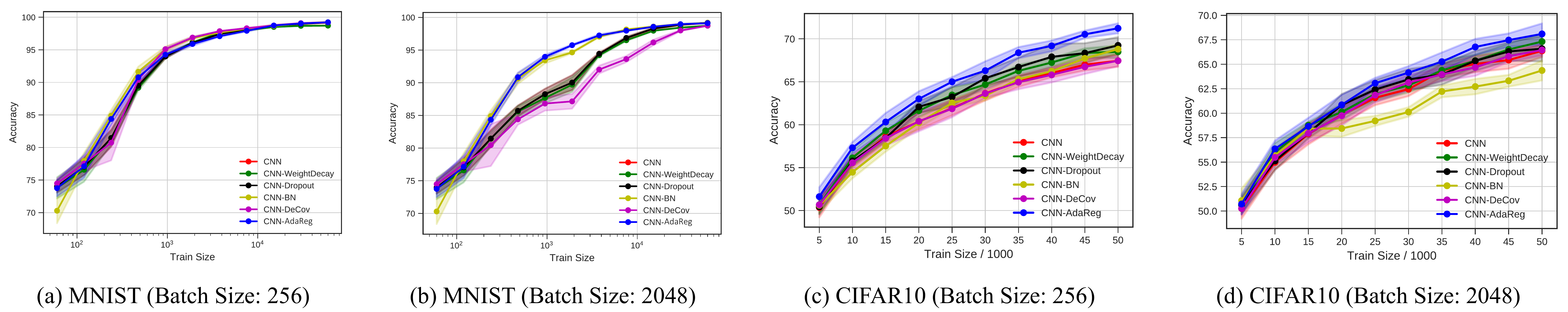}
    \caption{\small Generalization performance on MNIST and CIFAR10. AdaReg improves generalization under both minibatch settings.}
    \label{fig:multiclass}
\end{figure*}
\begin{figure}[t!]
\centering
\begin{subfigure}[b]{0.23\linewidth}
    \includegraphics[width=\linewidth]{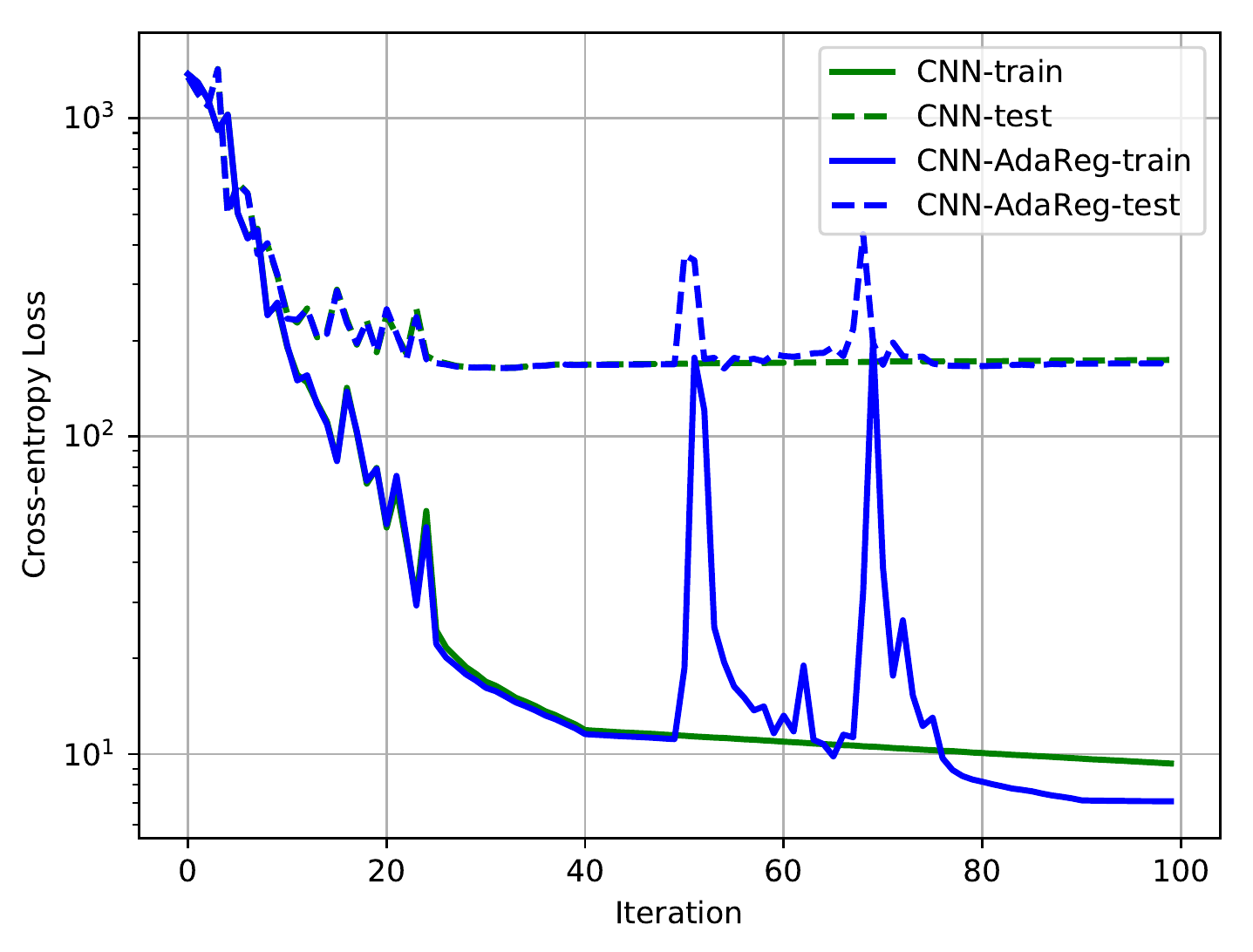}
    \caption{T/B: 600/256}
\end{subfigure}
~
\begin{subfigure}[b]{0.23\linewidth}
    \includegraphics[width=\linewidth]{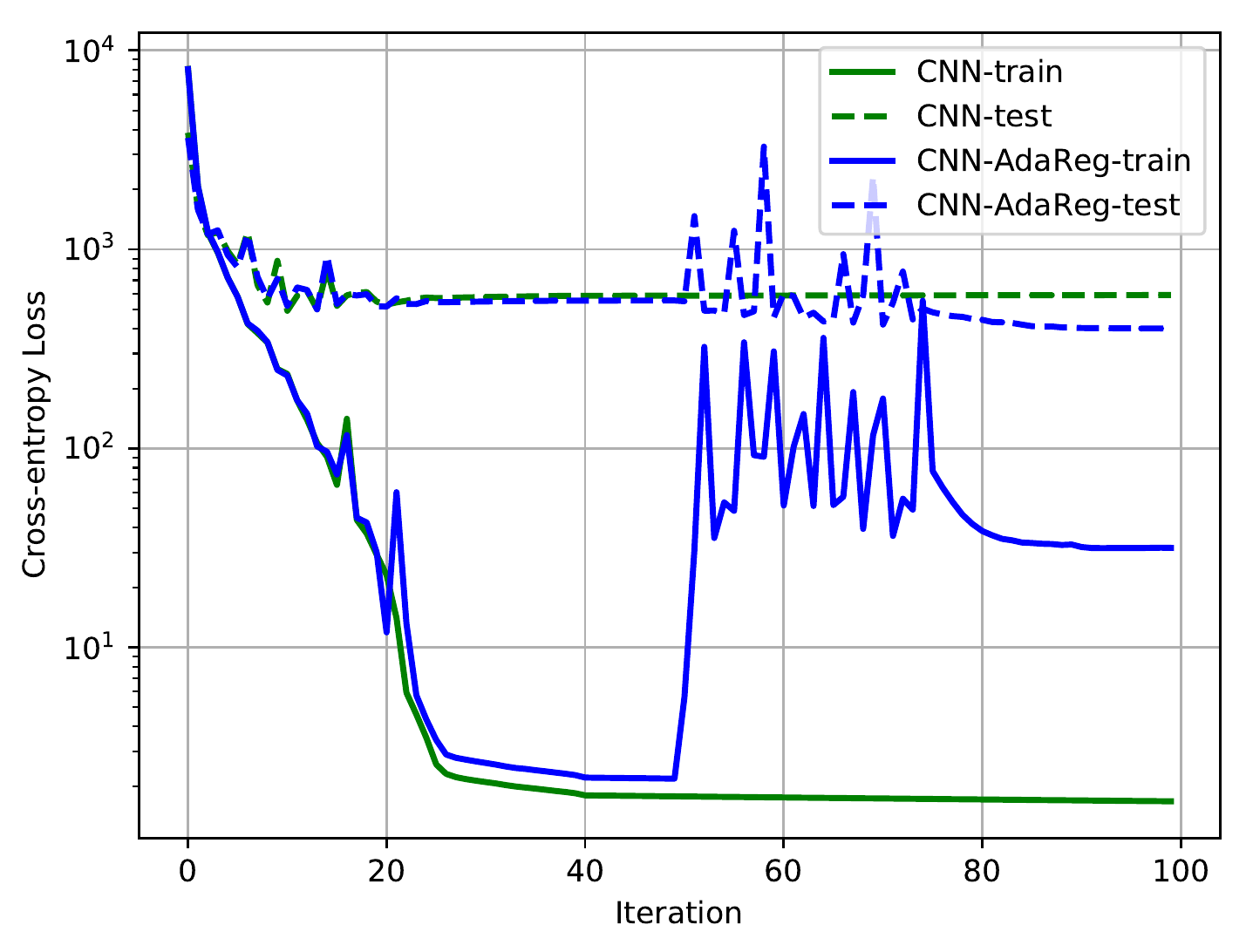}
    \caption{T/B: 6000/256}
\end{subfigure}
~
\begin{subfigure}[b]{0.23\linewidth}
    \includegraphics[width=\linewidth]{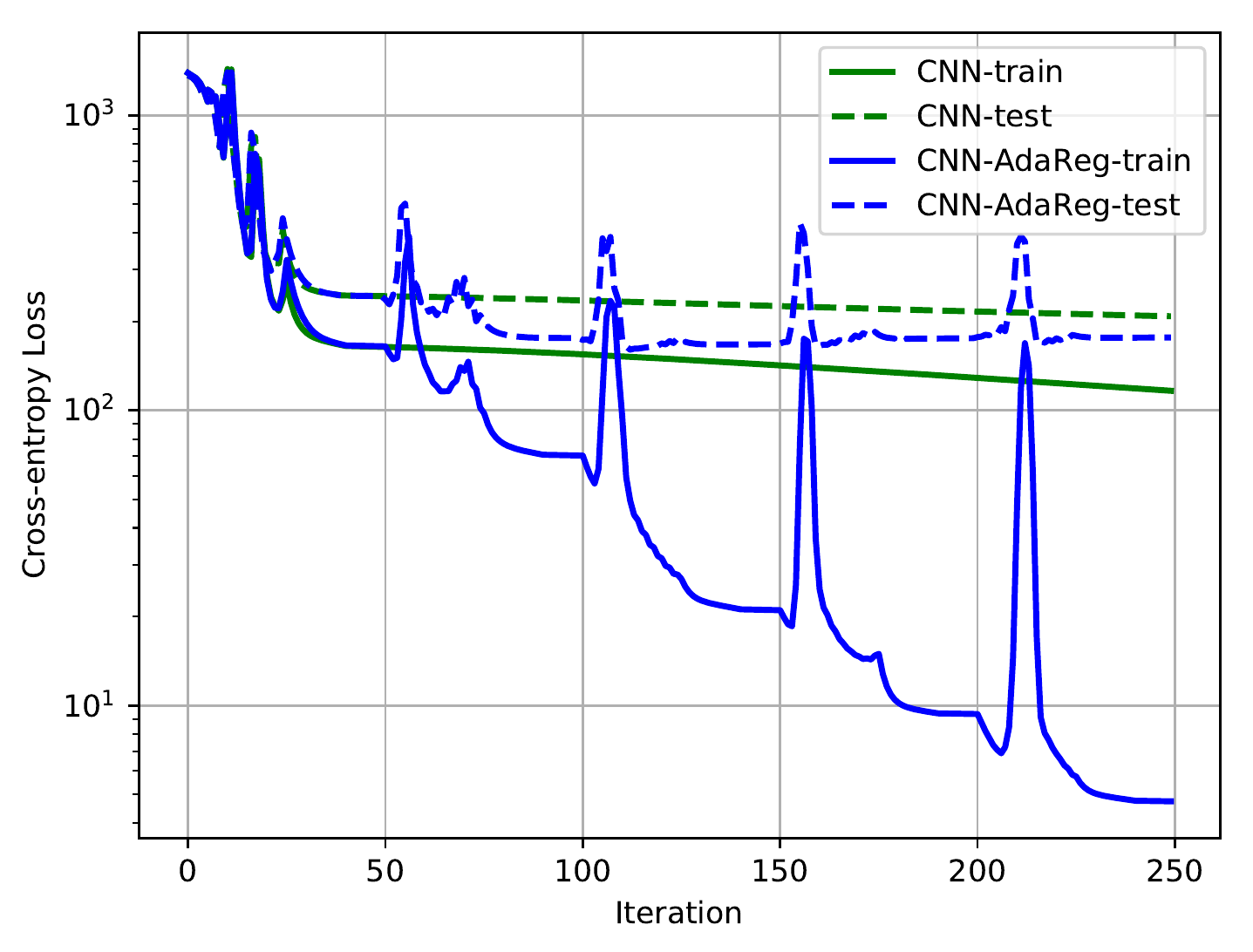}
    \caption{T/B: 600/2048}
\end{subfigure}
~
\begin{subfigure}[b]{0.23\linewidth}
    \includegraphics[width=\linewidth]{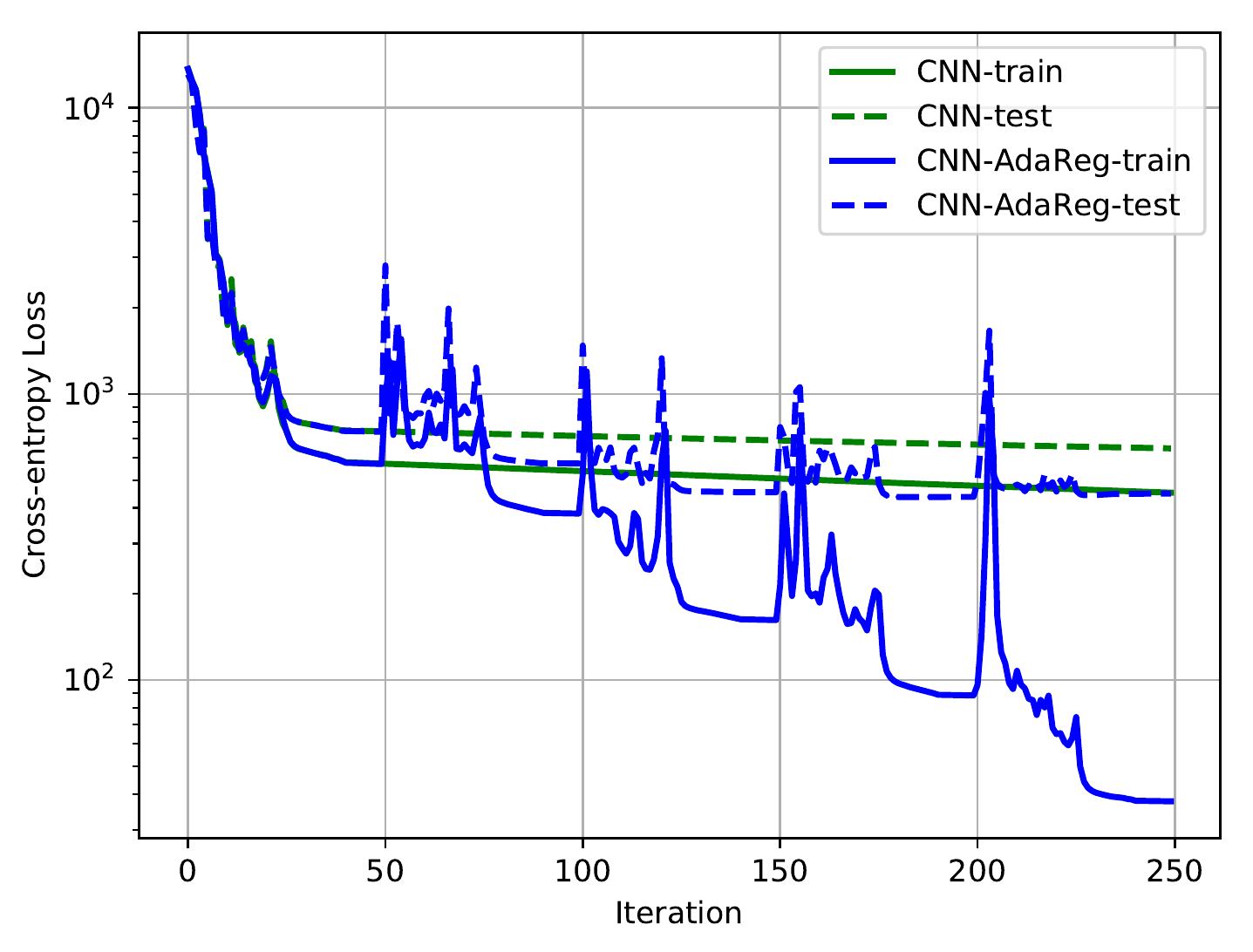}
    \caption{T/B: 6000/2048}
\end{subfigure}
\caption{Optimization trajectory of AdaReg on MNIST with training size/batch size on training and test sets. AdaReg helps to converge to better local optima. Note the $\log$-scale on $y$-axis.}
\label{fig:mnist_opt}
\end{figure}
\begin{figure*}[t!]
\centering
    \begin{subfigure}[b]{0.23\linewidth}
        \includegraphics[width=\textwidth]{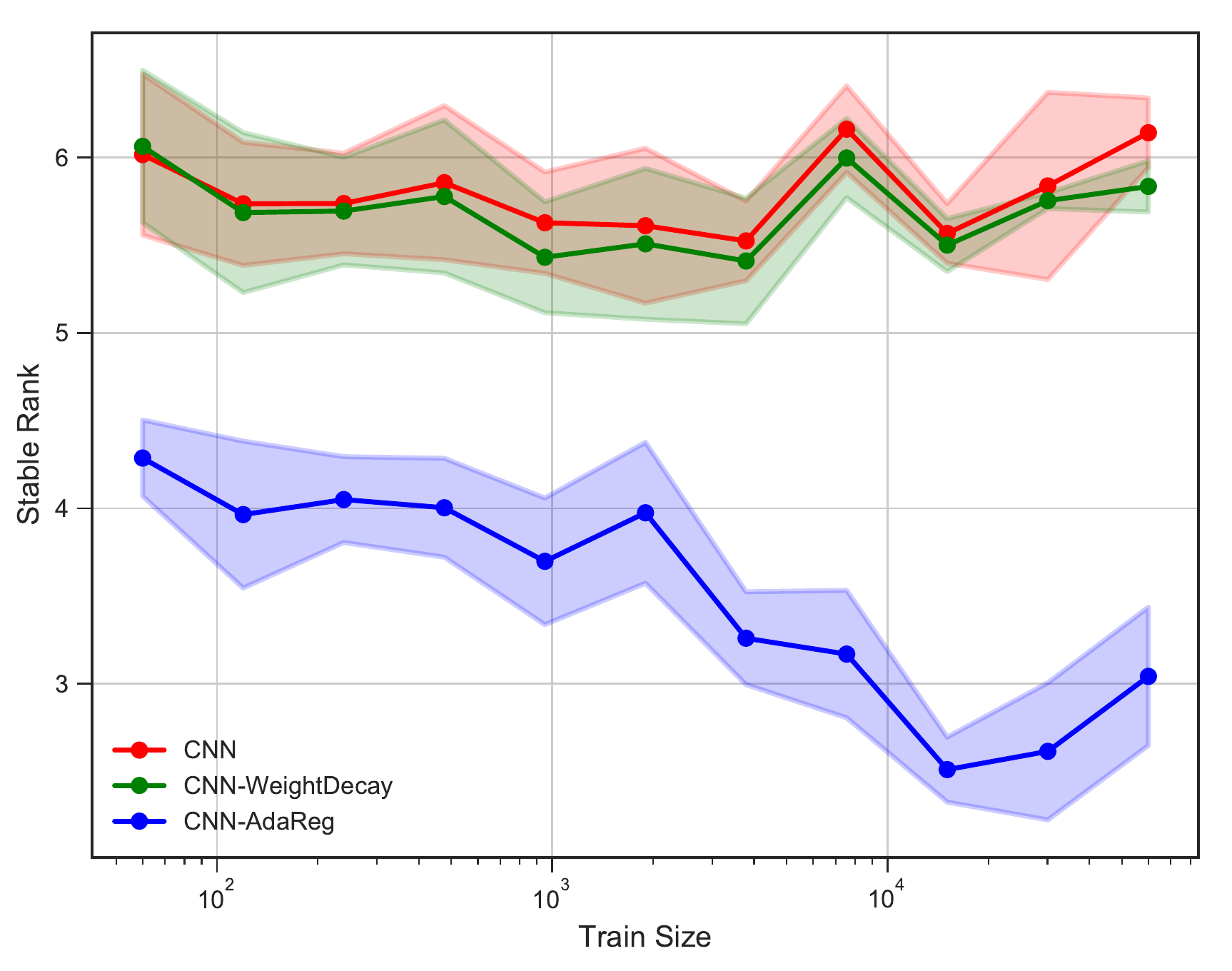}
        \caption{MNIST: S.\ rank}
        \label{fig:mnist_srank}
    \end{subfigure}
    ~
    \begin{subfigure}[b]{0.23\linewidth}
        \includegraphics[width=\textwidth]{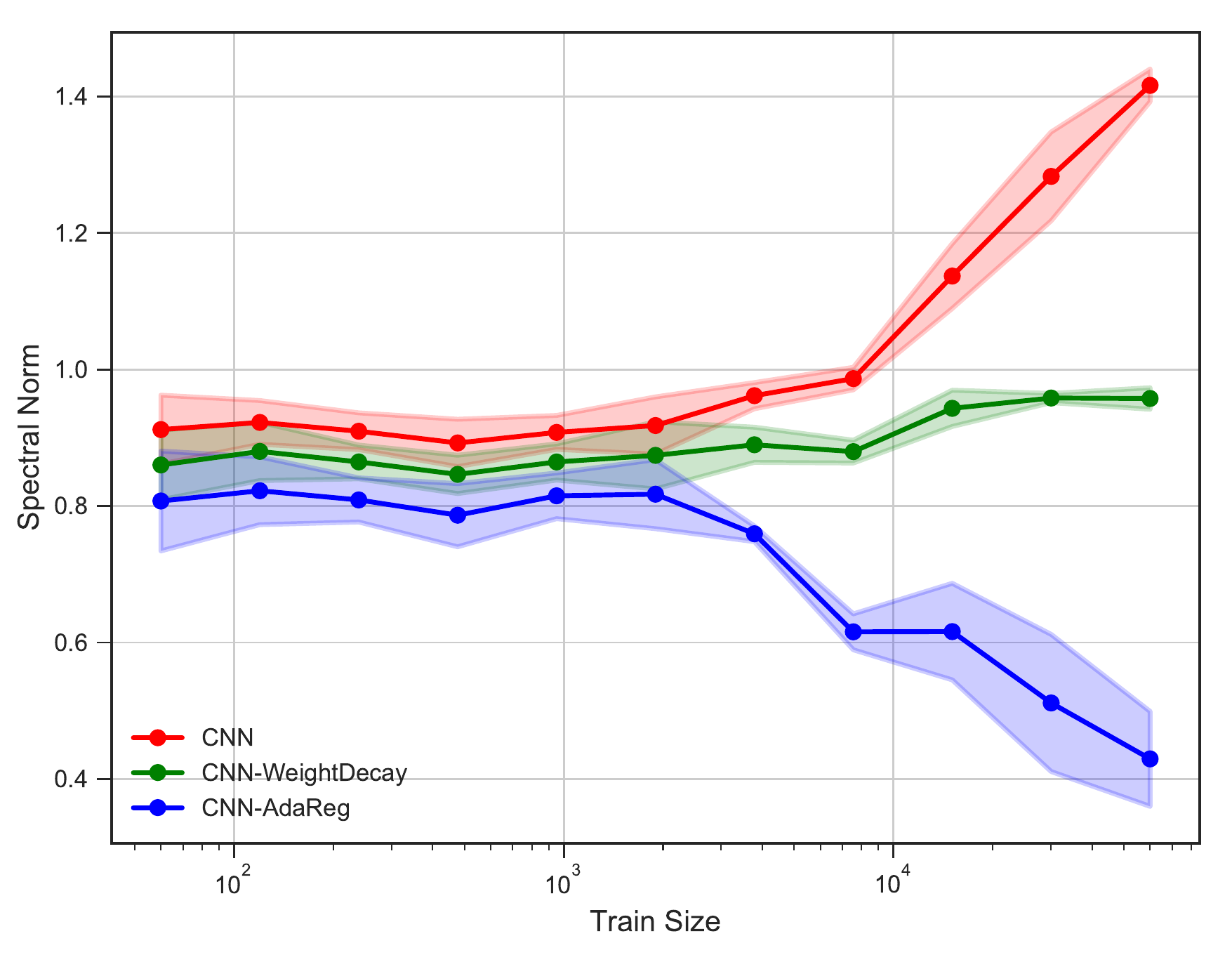}
        \caption{MNIST: S.\ norm}
        \label{fig:mnist_snorm}
    \end{subfigure}
    ~
    \begin{subfigure}[b]{0.23\linewidth}
        \includegraphics[width=\textwidth]{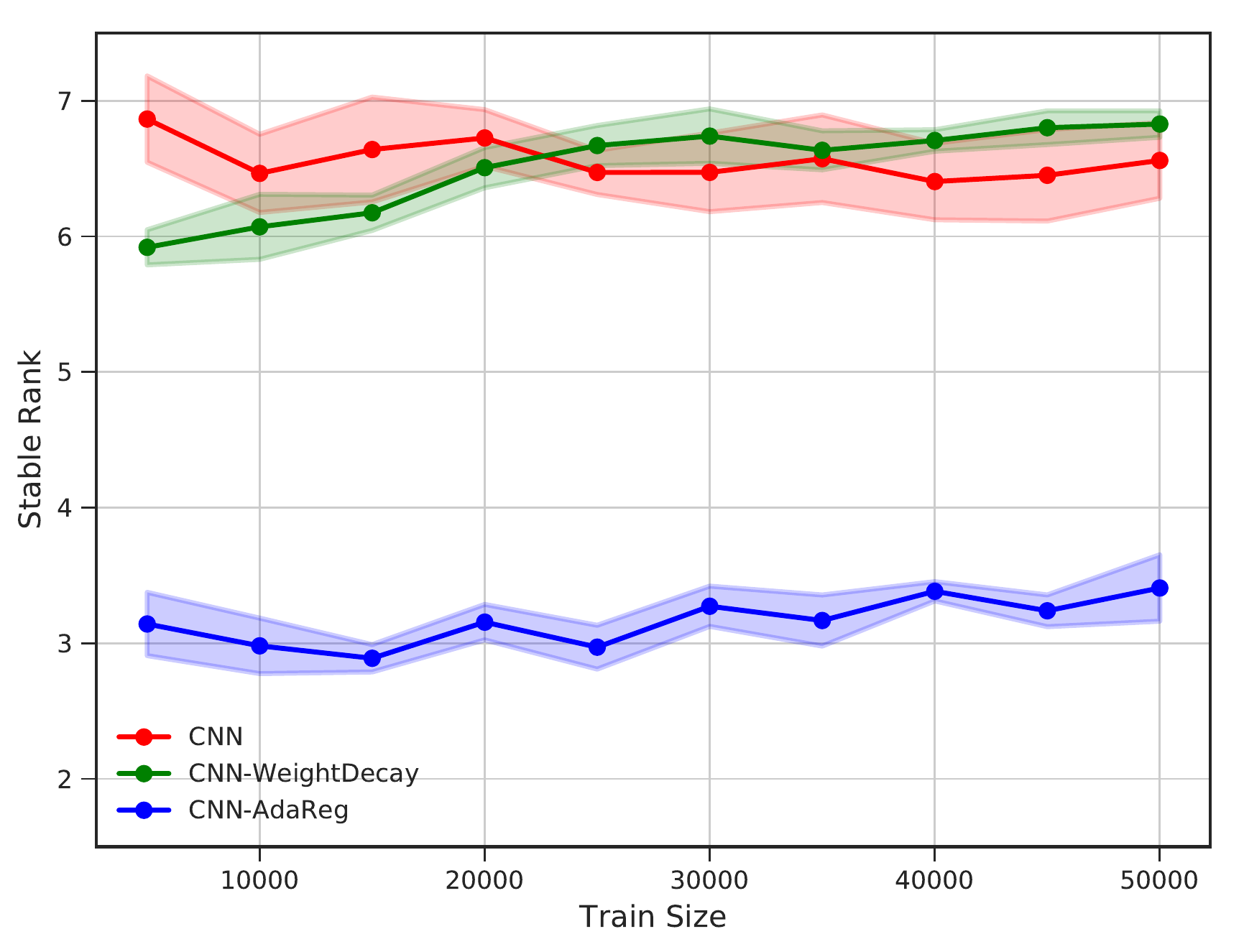}
        \caption{CIFAR10: S.\ rank}
        \label{fig:cifar_srank}
    \end{subfigure}
    ~
    \begin{subfigure}[b]{0.23\linewidth}
        \includegraphics[width=\textwidth]{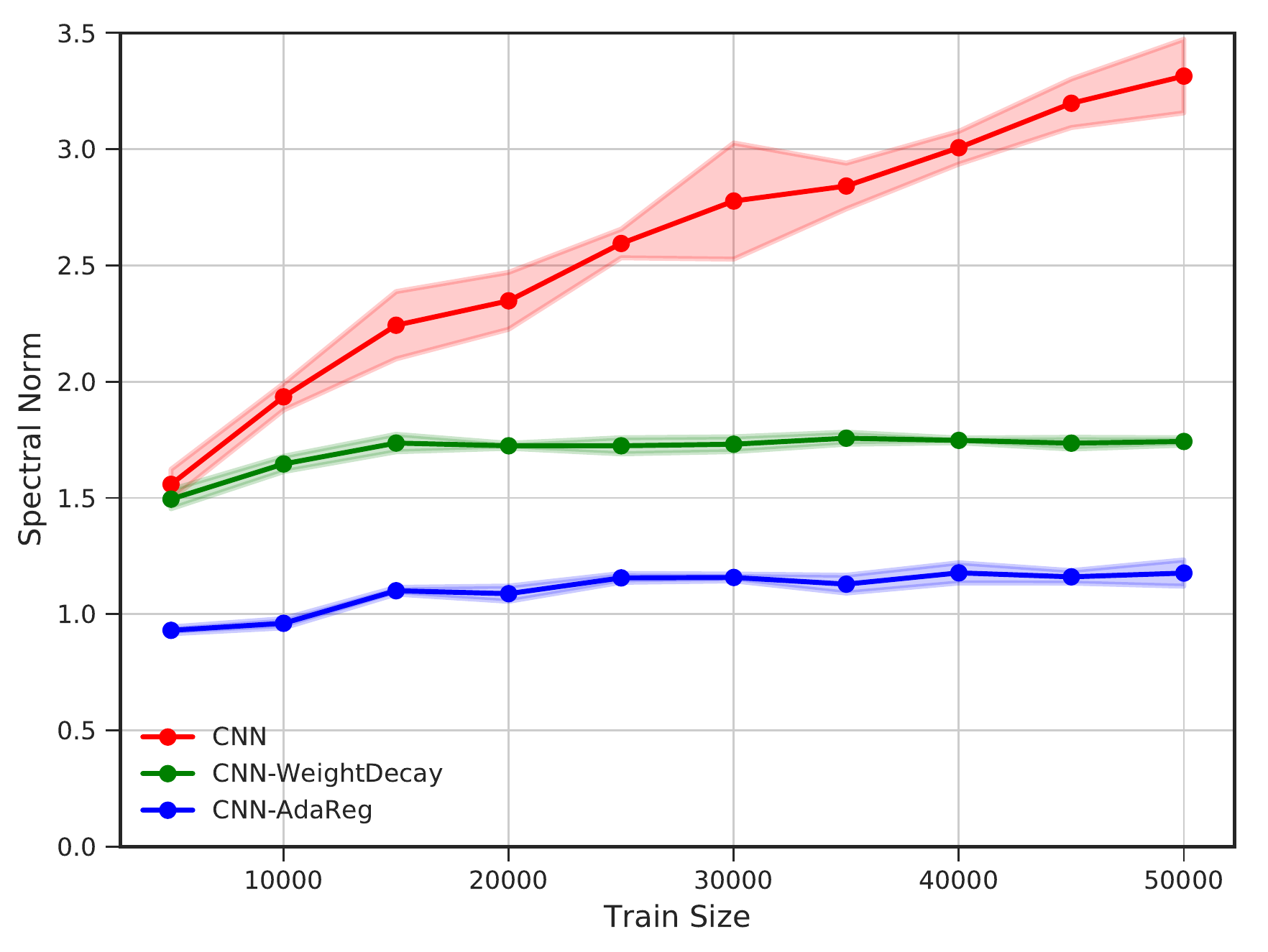}
        \caption{CIFAR10: S.\ norm}
        \label{fig:cifar_snorm}
    \end{subfigure}
\caption{Comparisons of stable ranks (S.\ rank) and spectral norms (S.\ norm) from different methods on MNIST and CIFAR10. $x$-axis corresponds to the training size.}
\label{fig:rank_norm}
\end{figure*}

\textbf{Stable rank and spectral norm}: Given a matrix $W$, the stable rank of $W$, denoted as $\srank(W)$, is defined as $\srank(W)\defeq ||W||_F^2/||W||_2^2$. 
As its name suggests, the stable rank is more stable than the rank because it is largely unaffected by tiny singular values. 
It has recently been shown~\citep[Theorem 1]{neyshabur2017pac} that the generalization error of neural networks crucially depends on both the stable ranks and the spectral norms of connection matrices in the network. Specifically, it can be shown that the generalization error is upper bounded by $O\big(\sqrt{\prod_{j=1}^L||W_j||_2^2\sum_{j=1}^L \srank(W_j)/n}\big)$, where $L$ is the number of layers in the network. Essentially, this upper bound suggests that smaller spectral norm (smoother function mapping) and stable rank (skewed spectrum) leads to better generalization.

To understand why AdaReg improves generalization, in Fig.~\ref{fig:rank_norm}, we plot both the stable rank and the spectral norm of the weight matrix in the last layer of the CNNs used in our MNIST and CIFAR10 experiments. We compare 3 methods: CNN without any regularization, CNN trained with weight decay and CNN with AdaReg. For each setting we repeat the experiments for 5 times, and we plot the mean along with its standard deviation. From Fig.~\ref{fig:mnist_srank} and Fig.~\ref{fig:cifar_srank} it is clear that AdaReg leads to a significant reduction in terms of the stable rank when compared with weight decay, and this effect is consistent in all the experiments with different training size. Similarly, in Fig.~\ref{fig:mnist_snorm} and Fig.~\ref{fig:cifar_snorm} we plot the spectral norm of the weight matrix. Again, both weight decay and AdaReg help reduce the spectral norm in all settings, but AdaReg plays a more significant role than the usual weight decay. Combining the experiments with the generalization upper bound introduced above, we can see that training with AdaReg leads to an estimator of $W$ that has lower stable rank and smaller spectral norm, which explains why it achieves a better generalization performance. 

Furthermore, this observation holds on the SARCOS datasets as well. For the SARCOS dataset, the weight matrix being regularized is of dimension $100\times 7$. Again, we compare the results using three methods: MTL, MTL-WeightDecay and MTL-AdaReg. As can be observed from Table~\ref{tab:srank}, compared with the weight decay regularization, AdaReg substantially reduces both the stable rank and the spectral norm of learned weight matrix, which also helps to explain why MTL-AdaReg generalizes better compared with MTL and MTL-WeightDecay.


\begin{figure*}[t!]
    \centering
    \begin{subfigure}[b]{0.24\linewidth}
        \centering 
        \includegraphics[width=0.9\linewidth]{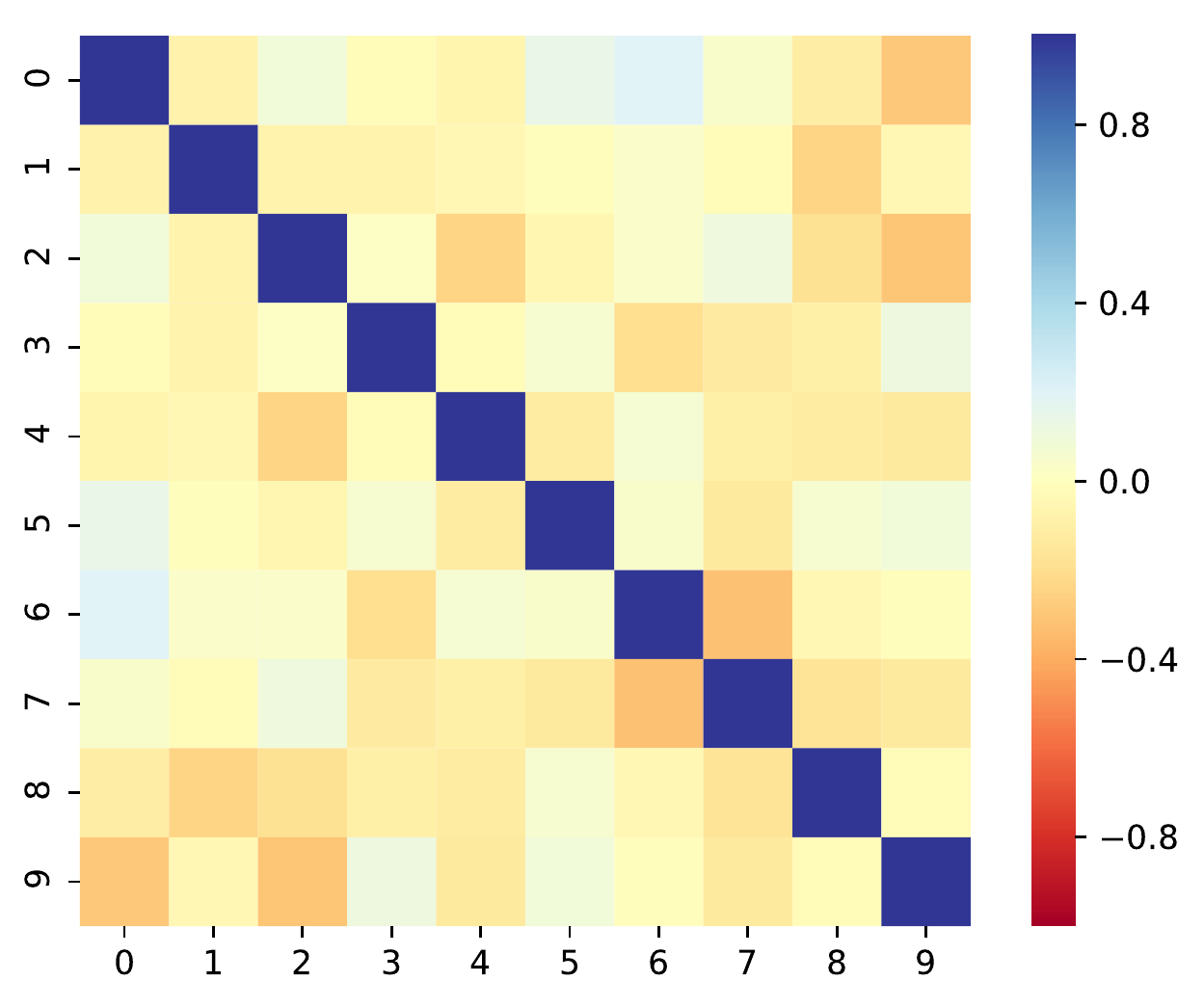}
        \caption{\small CNN, Acc: 89.34}
    \end{subfigure}
    \begin{subfigure}[b]{0.24\linewidth}
        \centering
        \includegraphics[width=0.9\linewidth]{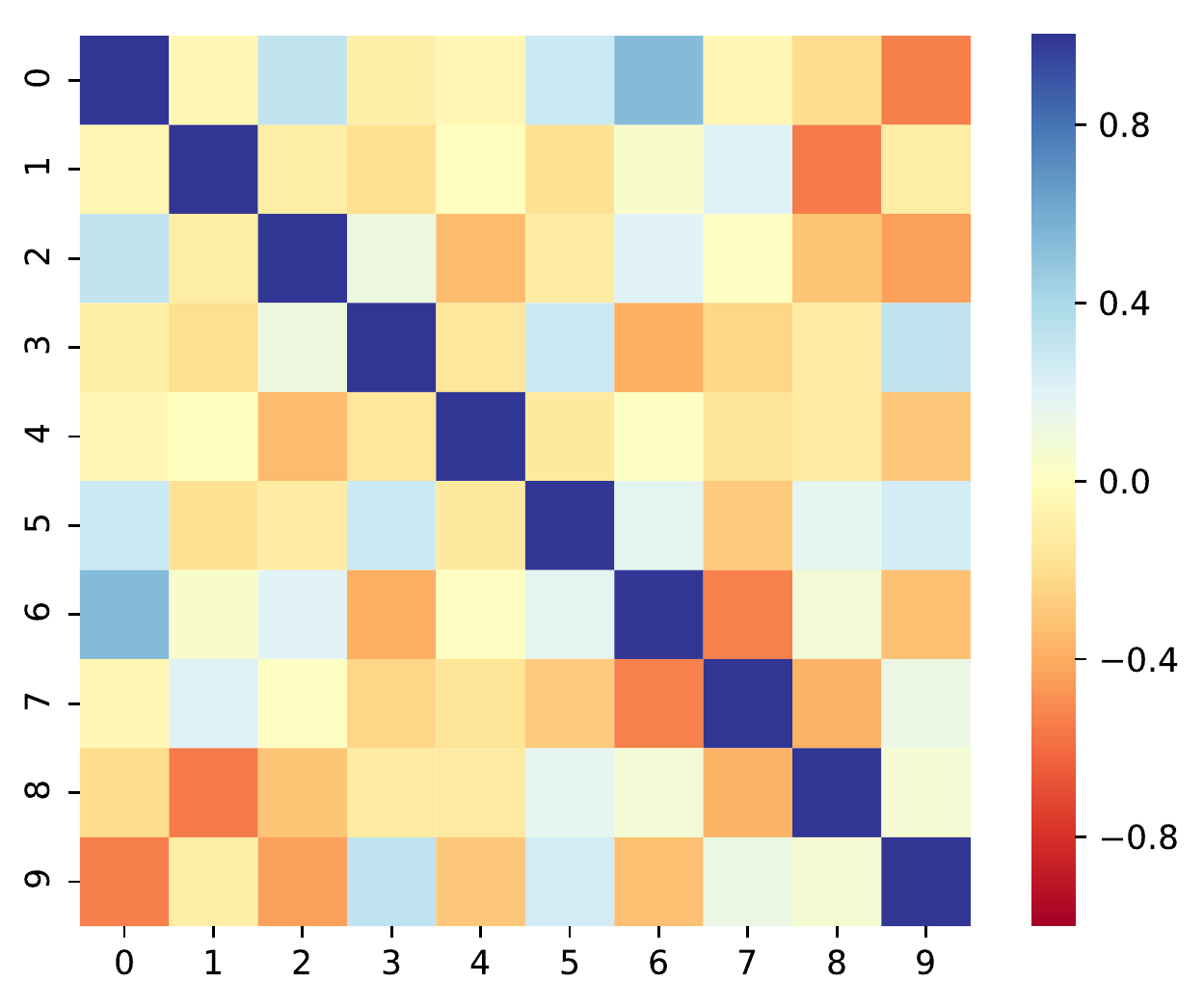}
        \caption{\small AdaReg, Acc: 92.50}
    \end{subfigure}
    \begin{subfigure}[b]{0.24\linewidth}
        \centering
        \includegraphics[width=0.9\linewidth]{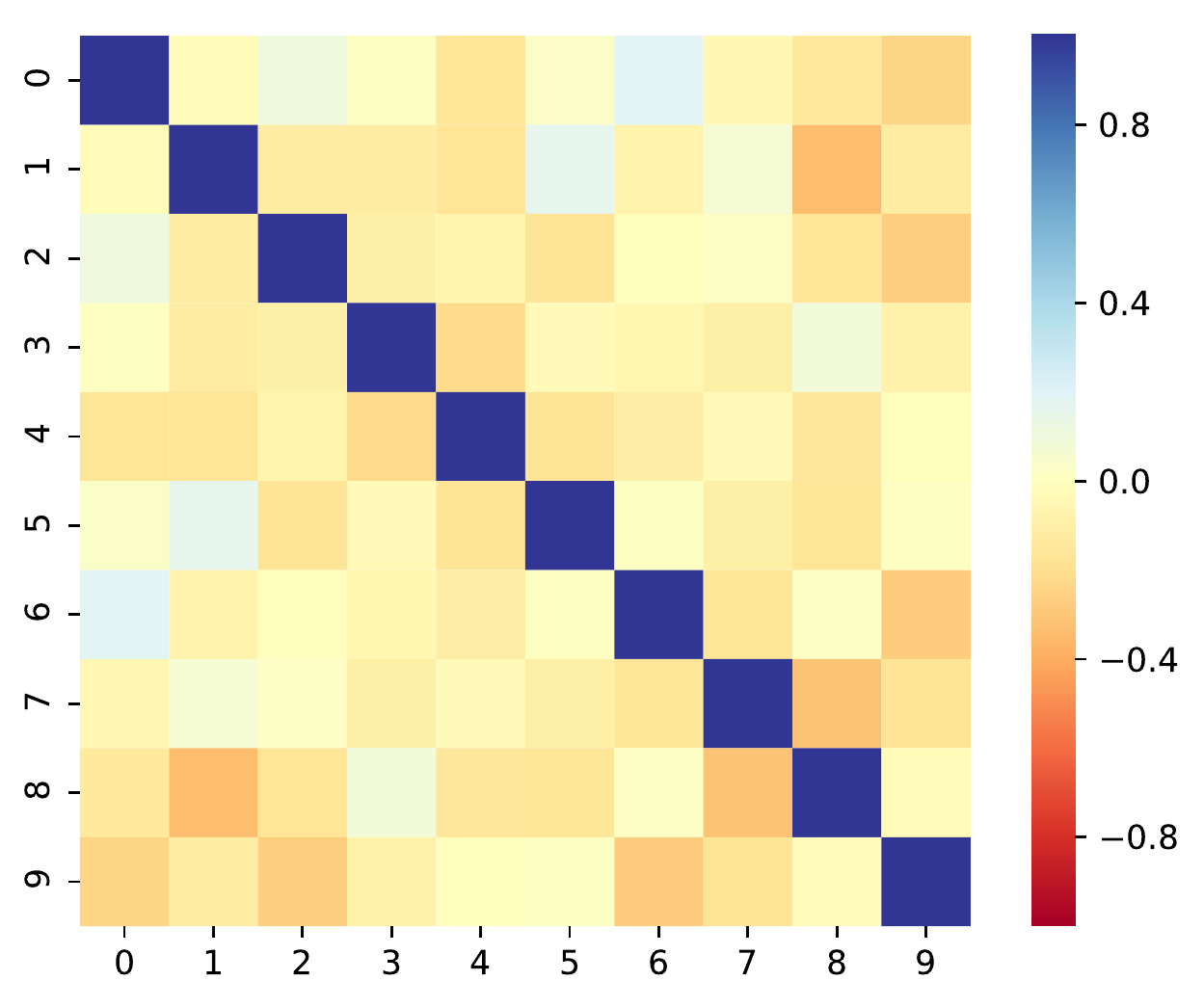}
        \caption{\small CNN, Acc: 98.99}
    \end{subfigure}
    \begin{subfigure}[b]{0.24\linewidth}
        \centering
        \includegraphics[width=0.9\linewidth]{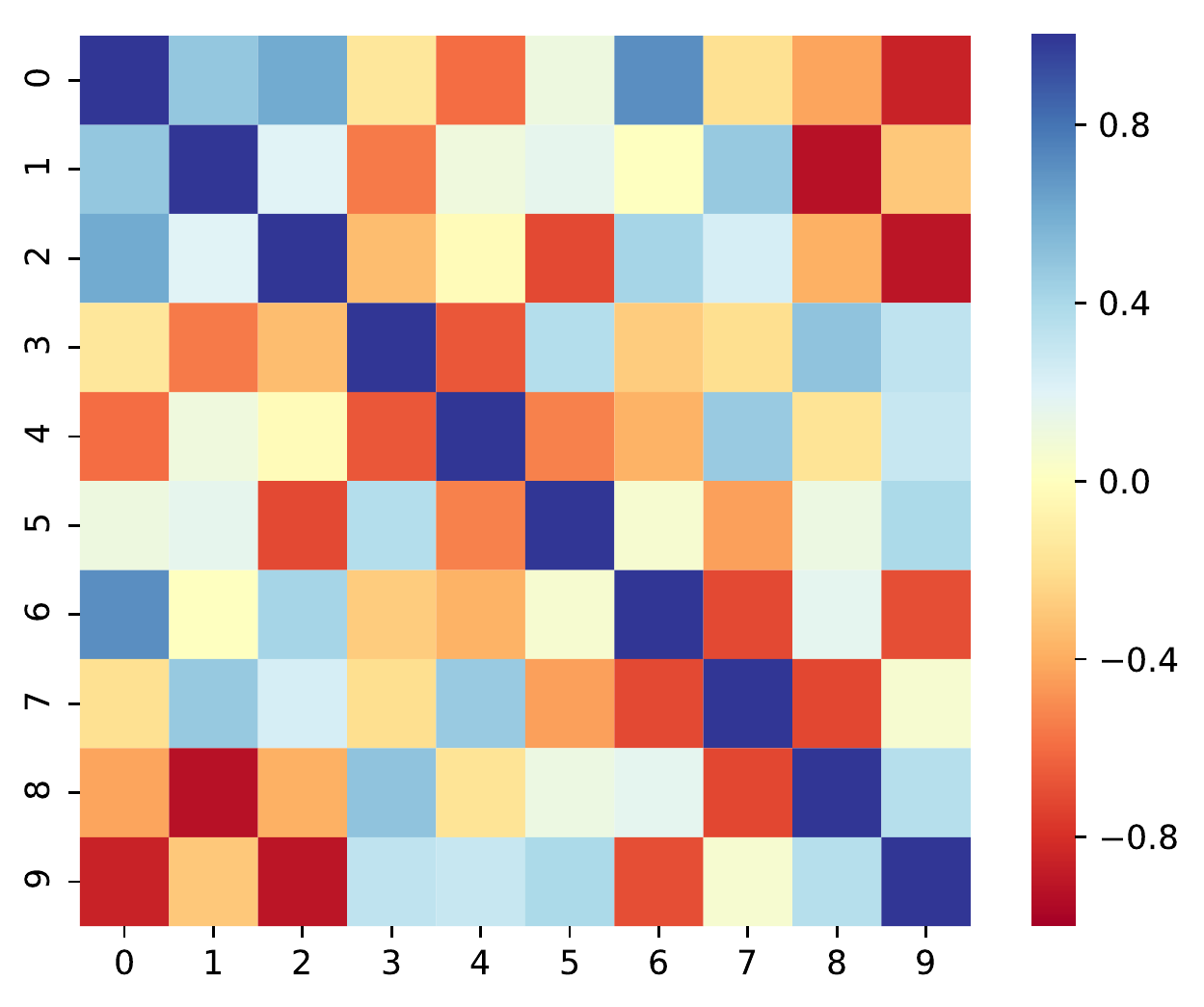}
        \caption{\small AdaReg, Acc: 99.19}
    \end{subfigure}
    \caption{ Correlation matrix of the weight matrix in the softmax layer. The left two correspond to dataset with training size 600 and the right two with size 60,000. Acc means the test set accuracy.}
\label{fig:correlation}
\vspace*{-1em}
\end{figure*}
\begin{table}[htb]
    \vspace*{-1em}
    \centering
    \caption{Stable rank and spectral norm on SARCOS.}
    \begin{tabular}{l*4c}\toprule
         &  \textbf{MTL} & \textbf{MTL-WeightDecay} & \textbf{MTL-AdaReg} \\\midrule
    \textbf{Stable Rank} & 4.48 & 4.83 & \textbf{2.88} \\
     \textbf{Spectral Norm} & 0.96 & 0.92 & \textbf{0.70}\\\bottomrule
    \end{tabular}
    \label{tab:srank}
\end{table}

\textbf{Correlation Matrix}: To verify that AdaReg imposes the effect of ``sharing statistical strength'' during training, we visualize the weight matrix of the softmax layer by computing the corresponding correlation matrix, as shown in Fig.~\ref{fig:correlation}. In Fig.~\ref{fig:correlation}, darker color means stronger correlation. We conduct two experiments with training size 600 and 60,000 respectively. As we can observe, training with AdaReg leads to weight matrix with stronger correlations, and this effect is more evident when the training set is large. This is consistent with our analysis of sharing statistical strengths. As a sanity check, from Fig.~\ref{fig:correlation} we can also see that similar digits, e.g., 1 and 7, share a positive correlation while dissimilar ones, e.g., 1 and 8, share a negative correlation.

\section{Related Work}
\paragraph{The Empirical Bayes Method vs Bayesian Neural Networks}
Despite the name, empirical Bayes method is in fact a frequentist approach to obtain estimator with favorable properties. On the other hand, truly Bayesian inference would instead put a posterior distribution over model weights to characterize the uncertainty during training~\citep{mackay1992practical,hernandez2015probabilistic,blundell2015weight}. However, due to the complexity of nonlinear neural networks, analytic posterior is not available, hence strong independent assumptions over model weight have to be made in order to achieve computationally tractable variational solution. Typically, both the prior and the variational posterior are assumed to fully factorize over model weights. As an exception, \citet{sun2017learning,louizos2016structured} seek to learn Bayesian neural nets where they approximate the intractable posterior distribution using matrix-variate Gaussian distribution. The prior for weights are still assumed to be known and fixed. As a comparison, we use matrix-variate Gaussian as the prior distribution and we learn the hyperparameter in the prior from data. Hence our method does not belong to Bayesian neural nets: we instead use the empirical Bayes principle to derive adaptive regularization method in order to have better generalization, as done in~\citep{brown1980adaptive,oman1984different}.

\paragraph{Regularization Techniques in Deep Learning}
Different kinds of regularization approaches have been studied and designed for neural networks, e.g., weight decay~\citep{krogh1992simple}, early stopping~\citep{caruana2001overfitting}, Dropout~\citep{srivastava2014dropout} and the more recent DeCov~\citep{cogswell2015reducing} method. BN was proposed to reduce the internal covariate shift during training, but recently it has been empirically shown to actually smooth the landscape of the loss function~\citep{santurkar2018does}. As a comparison, we propose AdaReg as an adaptive regularization method, with the aim to reduce overfitting by allowing neurons to share statistical strengths. From the optimization perspective, learning the row and column covariance matrices help to converge to better local optimum that also generalizes better.

\paragraph{Kronecker Factorization in Optimization}
The Kronecker factorization assumption has also been applied in the literature of neural networks to approximate the Fisher information matrix in second-order optimization methods~\citep{martens2015optimizing,zhang2017noisy}. The main idea here is to approximate the curvature of the loss function's landscape, in order to achieve better convergence speed compared with first-order method while maintaining the tractability of such computation. Different from these work, here in our method we assume a Kronecker factorization structure on the covariance matrix of the prior distribution, not the Fisher information matrix of the log-likelihood function. Furthermore, we also derive closed-form solutions to optimize these factors without any kind of approximations. 

\section{Conclusion}
\label{sec:conclusion}
Inspired by empirical Bayes method, in this paper we propose an adaptive regularization (AdaReg) with matrix-variate normal prior for model parameters in deep neural networks. The prior encourages neurons to borrow statistical strength from other neurons during the learning process, and it provides an effective regularization when training networks on small datasets. To optimize the model, we design an efficient block coordinate descent algorithm to learn both model weights and the covariance structures. Empirically, on three datasets we demonstrate that AdaReg improves generalization by finding better local optima with smaller spectral norms and stable ranks. We believe our work takes an important step towards exploring the combination of ideas from the empirical Bayes literature and rich prediction models like deep neural networks. One interesting direction for future work is to extend the current approach to online setting where we only have access to one training instance at a time, and to analyze the property of such method in terms of regret analysis with adaptive optimization methods. 

\subsubsection*{Acknowledgments}
HZ and GG would like to acknowledge support from the DARPA XAI project, contract \#FA87501720152 and an Nvidia GPU grant. YT and RS were supported in part by
DARPA grant FA875018C0150, DARPA SAGAMORE HR00111990016,  Office of Naval
Research grant N000141812861, AFRL CogDeCON, and Apple.
YT and RS would also like to acknowledge NVIDIA’s GPU support. Last, we thank Denny Wu for suggestions on exploring and analyzing our algorithm in terms of stable rank.

\bibliography{reference}
\bibliographystyle{plainnat}

\newpage
\onecolumn
\appendix

\section*{Appendix}
In this appendix we present missing proofs in the main paper. We also provide detailed description of our experiments. 


\section{Detailed Derivation and Proofs of Our Algorithm}
We first show that the optimization problem~\eqref{equ:matopt} is convex:
\begin{restatable}{proposition}{convex}
The optimization problem~\eqref{equ:matopt} is convex. 
\end{restatable}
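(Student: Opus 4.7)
The plan is to decompose the objective into three pieces and verify convexity of each on the cone $\SPD_{++}^p$, then conclude by the fact that a sum of convex functions is convex.

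First, I would rewrite the objective as $f_1(\prr) + f_2(\prr) + f_3(\prr)$, where $f_1(\prr) = \tr(\prr W \prc W^T)$, $f_2(\prr) = -d\logdet(\prr)$, and $f_3(\prr) = \mathbb{I}_{\constraint}(\prr)$. For $f_1$, since $W \prc W^T$ is a fixed symmetric matrix (note $\prc \succ 0$, so this is well-defined), $f_1$ is linear in $\prr$ via the trace inner product, hence convex. For $f_2$, I would invoke the standard result that $-\logdet(\cdot)$ is a convex function on the positive definite cone (see, e.g., Boyd and Vandenberghe), scaled by the positive constant $d$.

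For $f_3$, I need to show that $\constraint = \{A \in \SPD_{++}^p \mid uI_p \preceq A \preceq v I_p\}$ is a convex subset of $\SPD^p$, since the indicator of a convex set is convex. Convexity of $\constraint$ follows because the positive semidefinite ordering is preserved under convex combinations: for $A_1, A_2 \in \constraint$ and $t \in [0,1]$, we have $t A_1 + (1-t) A_2 - u I_p = t(A_1 - uI_p) + (1-t)(A_2 - uI_p) \succeq 0$, and similarly $vI_p - (tA_1 + (1-t)A_2) \succeq 0$, so $\constraint$ is closed under convex combinations.

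Finally, summing the three convex functions yields a convex objective, which establishes the proposition. I do not anticipate a significant obstacle here; the only subtlety is recognizing that the domain $\SPD_{++}^p$ on which $-\logdet$ is defined contains $\constraint$ (guaranteed by $u > 0$), so the problem is well-posed and strictly convex in the interior due to the strict convexity of $-\logdet$.
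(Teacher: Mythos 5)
Your proof is correct, and the overall skeleton (convex objective plus convex feasible set, with the constraint absorbed into an indicator function) matches the paper's. The one place where you genuinely diverge is the convexity of $\constraint = \{A \in \SPD_{++}^p \mid uI_p \preceq A \preceq vI_p\}$: you observe that both order constraints are affine shifts into the positive semidefinite cone, so convexity follows in one line from the fact that $t(A_1 - uI_p) + (1-t)(A_2 - uI_p) \succeq 0$ and likewise for $vI_p - A$. The paper instead proves the upper bound via subadditivity of the spectral norm (using that for positive definite matrices the spectral norm equals the largest eigenvalue) and the lower bound via a Courant--Fischer min-max argument showing $\lambda_{\min}(t\Omega_1 + (1-t)\Omega_2) \geq t\lambda_{\min}(\Omega_1) + (1-t)\lambda_{\min}(\Omega_2)$. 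Your route is more elementary and shorter, since it only uses that the Loewner order is defined by a convex cone; the paper's route, while heavier, makes explicit the superadditivity/subadditivity of the extreme eigenvalues, which is the same fact in eigenvalue language. Your closing remark that the problem is strictly convex on the interior of $\constraint$ (from strict convexity of $-\logdet$) is a bonus not claimed in the paper, and it is correct.
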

\begin{proof}
It is clear that the objective function is convex: the trace term is linear in $\prr$ and it is well-known that the $\log\det(\cdot)$ is concave in the positive definite cone~\citep{boyd2004convex}, hence it trivially follows that $\tr(\prr W\prc W^T)  - d\logdet(\prr)$ is convex in $\prr$.

It remains to show that the constraint set is also convex. Let $\Omega_1, \Omega_2$ be any feasible points, i.e., $uI_p\preceq \Omega_1 \preceq vI_p$ and $uI_p\preceq \Omega_2\preceq vI_p$. Let $\forall t \in (0, 1)$, we have:
\begin{equation*}
    ||t\Omega_1 + (1-t)\Omega_2||_2 \leq t||\Omega_1||_2 + (1-t)||\Omega_2||_2 \leq tv + (1-t)v = v,
\end{equation*}
where we use $||\cdot||_2$ to denote the spectral norm of a matrix. Now since both $\Omega_1$ and $\Omega_2$ are positive definite, the spectral norm is also the largest eigenvalue, hence this shows that $t\Omega_1 + (1-t)\Omega_2 \preceq vI_p$.

To show the other direction, we use the Courant-Fischer characterization of eigenvalues. Let $\lambda_\text{min}(A)$ denote the minimum eigenvalue of a real symmetric matrix $A$, then by the Courant-Fischer min-max theorem, we have:
\begin{equation*}
    \lambda_\text{min}(A) \defeq \min_{\xx\neq 0, ||\xx||_2 = 1}\quad ||A\xx||_2.
\end{equation*}
For the matrix $t\Omega_1 + (1-t)\Omega_2$, let $\xx^*$ be the vector corresponding to the minimum eigenvalue, hence we have:
\begin{align*}
    \lambda_\text{min}(t\Omega_1 + (1-t)\Omega_2) &= \min_{\xx\neq 0, ||\xx||_2 = 1}||(t\Omega_1 + (1-t)\Omega_2)\xx||_2 \\
    &= (t\Omega_1 + (1-t)\Omega_2)\xx^* \\
    &\geq t\lambda_\text{min}(\Omega_1) + (1-t)\lambda_\text{min}(\Omega_2) \\
    &\geq tu + (1-t)u \\
    &= u,
\end{align*}
which also means that $t\Omega_1 + (1-t)\Omega_2\succeq uI_p$, and this completes the proof.
\end{proof}

We now give the proof of Lemma~\ref{lemma:key} in our main paper:
\key*
\begin{proof}
Let $S\in\mathcal{N}_{\constraint}(\prr)$. We want to show $-S\in\mathcal{N}_{\constraint}(\prr^{-1})$. By definition of the normal cone, since $S\in\mathcal{N}_{\constraint}(\prr)$, we have:
\begin{equation*}
    \tr(SZ)\leq \tr(S\prr), \quad\forall Z\in\constraint
\end{equation*}
Now realize that $\prr\in\constraint$ and $\constraint$ is a compact set, it follows $\prr$ is the solution of the following linear program:
\begin{equation*}
    \max \quad \tr(SZ),\qquad \text{subject to}\quad Z\in\constraint
\end{equation*}
Since both $S$ and $Z$ are real symmetric matrix, we can decompose them as $Z\defeq Q_Z\Lambda_ZQ_Z^T$ and $S\defeq Q_S\Lambda_SQ_S^T$, where both $Q_Z, Q_S$ are orthogonal matrices and $\Lambda_Z, \Lambda_S$ are diagonal matrices with the corresponding eigenvalues in decreasing order. Plug them into the objective function, we have:
\begin{equation*}
    \tr(SZ) = \tr(Q_S\Lambda_SQ_S^TQ_Z\Lambda_ZQ_Z^T) = \tr(\Lambda_SQ_S^TQ_Z\Lambda_ZQ_Z^TQ_S).   
\end{equation*}
Define $K\defeq Q_S^TQ_Z$ and $D = K\circ K$, where we use $\circ$ to denote the Hadamard product between two matrices. Since both $Q_S$ and $Q_Z$ are orthogonal matrices, we know that $K$ is also orthogonal, which implies:
\begin{equation*}
    \sum_{j=1}^p D_{ij} = 1, \forall i\in[p],\quad\text{and}\quad \sum_{i=1}^p D_{ij} = 1, \forall j\in[p].
\end{equation*}
As a result, $D$ is a doubly stochastic matrix and we can further simplify the objective function as:
\begin{equation*}
    \tr(\Lambda_SQ_S^TQ_Z\Lambda_ZQ_Z^TQ_S) = \tr(\Lambda_S K\Lambda_Z K^T) = \lambda_S^T D \lambda_Z = \sum_{i,j=1}^p \lambda_{S,i}D_{ij}\lambda_{Z,j},
\end{equation*}
where $\lambda_S$ and $\lambda_Z$ are $p$ dimensional vectors that contain the eigenvalues of $S$ and $Z$ in decreasing order, respectively. Now for any $\lambda_S$ and $\lambda_Z$ in decreasing order, we have:
\begin{equation}
    u\sum_{i=1}^p\lambda_{S,i}\leq\sum_{i=1}^p \lambda_{S,i}\lambda_{Z,1+p-i}\leq\sum_{i,j=1}^p\lambda_{S,i}D_{ij}\lambda_{Z,j} \leq \sum_{i=1}^p \lambda_{S,i}\lambda_{Z,i} \leq v\sum_{i=1}^p \lambda_{S,i}
\label{equ:chain}
\end{equation}
From~\eqref{equ:chain}, in order for $\prr$ to maximize the linear program, it must hold that $D = K = I_p$ and all the eigenvalues of $\prr$ are $v$. But due to the assumption that $uv = 1$, in this case we also know that all the eigenvalues of $\prr^{-1}$ are $1/v = u$, hence $\prr^{-1}$ also minimizes the above linear program, which implies:
\begin{equation*}
    \tr(S\prr^{-1})\leq \tr(SZ), \quad\forall Z\in\constraint\Leftrightarrow\tr(-S(Z-\prr^{-1}))\leq 0\quad\forall Z\in\constraint.
\end{equation*}
In other words, we have $-S\in\mathcal{N}_{\constraint}(\prr^{-1})$. Using exactly the same arguments it is clear to see that the other direction also holds, hence we have $\mathcal{N}_{\constraint}(\prr) = -\mathcal{N}_{\constraint}(\prr^{-1})$.
\end{proof}

Here we proceed to derive the projection operator:
\theo*
\begin{proof}
Since $\prr\in\constraint$ is real and symmetric, we can reparametrize $\prr$ as $\prr\defeq U\Lambda_{\prr}U^T$ where $U$ is an orthogonal matrix and $\Lambda_{\prr}$ is a diagonal matrix whose entries corresponds to the eigenvalues of $\prr$. Recall that $U$ corresponds to a rigid transformation that preserves length, so we have:
\begin{equation}
    ||\prr - \widetilde{\prr}||_F^2 = ||U\Lambda_{\prr}U^T - UU^T\widetilde{\prr}UU^T||_F^2 = ||\Lambda_{\prr} - U^T\widetilde{\prr}U||_F^2
\label{equ:stepone}
\end{equation}
Define $B\defeq U^T\widetilde{\prr}U$. Now by using the fact that $\widetilde{\prr}$ can be eigendecomposed as $\widetilde{\prr} = Q\Lambda Q^T$, we can further simplify~\eqref{equ:stepone} as:
\begin{equation*}
||\Lambda_{\prr} - U^T\widetilde{\prr}U||_F^2 = \sum_{i\in[p]}(\Lambda_{\prr,ii} - B_{ii})^2 + \sum_{i\neq j}B_{ij}^2\geq \sum_{i\in[p]}(\Lambda_{\prr,ii} - B_{ii})^2 \geq \sum_{i\in[p]}(\mathbb{T}_{[u,v]}(B_{ii}) - B_{ii})^2,
\end{equation*}
where the last inequality holds because $u\leq \Lambda_{\prr,ii} \leq v,\forall i\in[p]$. In order to achieve the first equality, $B = U^T\widetilde{\prr}U$ should be a diagonal matrix, which means $U^TQ = I_p\Leftrightarrow U = Q$. In this case, $\diag(B) = \Lambda$. To achieve the second equality, simply let $\Lambda_{\prr} = \mathbb{T}_{[u,v]}(\diag(B)) = \mathbb{T}_{[u,v]}(\Lambda)$, which completes the proof. 
\end{proof}

\section{More Experiments}
In this section we first describe the network structures used in our main experiments and present more experimental results. 
\subsection{Network Structures}
\textbf{Multiclass Classification (MNIST \& CIFAR10)}: We use a convolutional neural network as our baseline model. The network used in the experiment has the following structure: $\mathsf{CONV}_{5 \times 5 \times 1 \times 10}$-$\mathsf{CONV}_{5 \times 5 \times 10 \times 20}$-$\mathsf{FC}_{320 \times 50}$-$\mathsf{FC}_{50 \times 10}$. The notation $\mathsf{CONV}_{5 \times 5 \times 1 \times 10}$ denotes a convolutional layer with kernel size $5\times 5$ from depth $1$ to $10$; the notation $\mathsf{FC}_{320 \times 50}$ denotes a fully connected layer with size $320 \times 50$. Similarly, CIFAR10 considers the structure: $\mathsf{CONV}_{5 \times 5 \times 3 \times 10}$-$\mathsf{CONV}_{5 \times 5 \times 10 \times 20}$-$\mathsf{FC}_{500 \times 500}$-$\mathsf{FC}_{500 \times 500}$-$\mathsf{FC}_{500 \times 10}$. 

\textbf{Multitask Regression (SARCOS)}: The network structure is given by $\mathsf{FC}_{21 \times 256}$-$\mathsf{FC}_{256 \times 100}$-$\mathsf{FC}_{100\times 7}$. 


\subsection{Combination}
As discussed in the main text, combining the proposed AdaReg with BN can further improve the generalization performance, due to the complementary effects between these two approaches: BN helps smoothing the landscape of the loss function while AdaReg also changes the curvature via the row and column covariance matrices (see Fig.~\ref{fig:mnistcombined}). 

On the other hand, we do not observe significant difference when combining AdaReg with Dropout on this dataset. While we are not clear what is the exact reason for this effect, we conjecture this is due to the fact that Dropout works as a regularizer that prevents coadaptation while AdaReg instead encourages neurons to learn from each other. 

\begin{figure}[htb]
\begin{subfigure}[b]{0.45\linewidth}
    \centering
    \includegraphics[width=\linewidth]{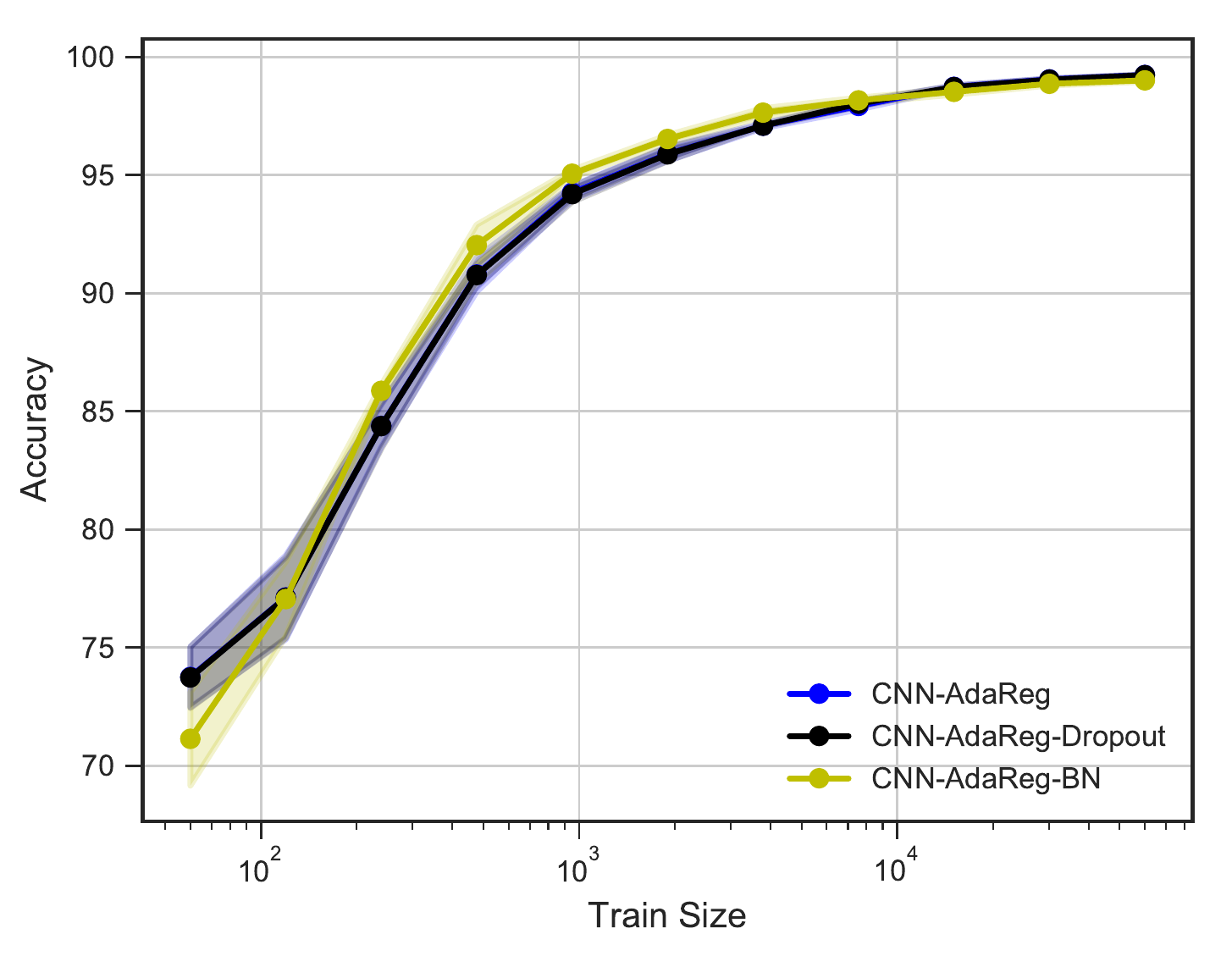}
    \caption{Batch size = 256.}
    \label{fig:mnistcombined256}
\end{subfigure}
\begin{subfigure}[b]{0.45\linewidth}
    \centering
    \includegraphics[width=\linewidth]{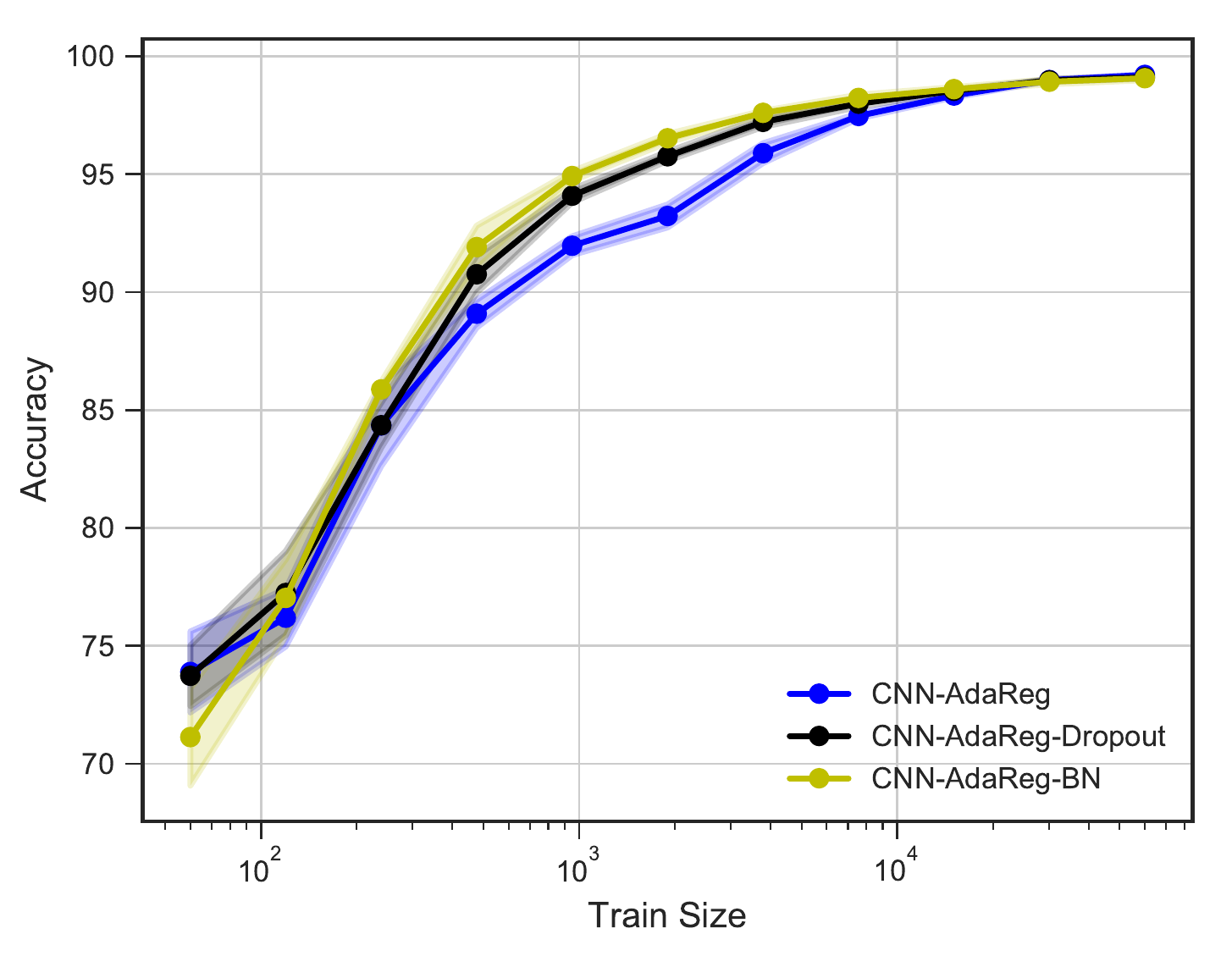}
    \caption{Batch size = 2048.}
    \label{fig:mnistcombined2048}
\end{subfigure}
\caption{Combine AdaReg with BN and Dropout on MNIST.}
\label{fig:mnistcombined}
\end{figure}

\subsection{Ablations}
In all the experiments, the AdaReg algorithm is performed on the softmax layer. Here, we study the effects of applying AdaReg algorithm in all $\mathsf{CONV}/\mathsf{FC}$ layers, all $\mathsf{CONV}$ layers, all $\mathsf{FC}$ layers, and the last $\mathsf{FC}$ layer (i.e., softmax layer). We first discuss how we handle the convolutions in our AdaReg algorithm. Consider a convolutional layer with \{input channel, output channel, kernel width, kernel height\} being \{$a, b, k_w, k_h$\}, we vectorize the original 4-D tensor to be a 2-D matrix of size $ak_wk_h \times b$. The AdaReg algorithm can therefore be directly applied on this transformed matrix. Next, we perform the experiment on MNIST with batch size 2048 in Fig.~\ref{fig:mnist_diff_layer}. The training set size here is chosen as \{128, 256, 512, 1024, 2048, 4096, 8192, 16384, 32768, 60000\}.

We find that simply applying the AdaReg algorithm in the softmax layer reaches best generalization as comparing to applying AdaReg on more layers. The improvement is more obvious when the training set size is small. We argue that neural networks can be realized as a combination of a complex nonlinear transformation (i.e., feature extraction) and a linear model (i.e., softmax layer). Since AdaReg represents a correlation learning in the weight matrix, it implies that implicit correlations of neurons can also be discovered. In the real world setting, different tasks should be correlated. Therefore, applying AdaReg in the linear model shall improve the model performance by discovering these tasks correlations. On the contrary, the nonlinear features should be decorrelated for the purpose of generalization. Hence, applying AdaReg in previous layers may lead to adversarial effect. 

\begin{figure}
    \centering
    \includegraphics[width=0.6\linewidth]{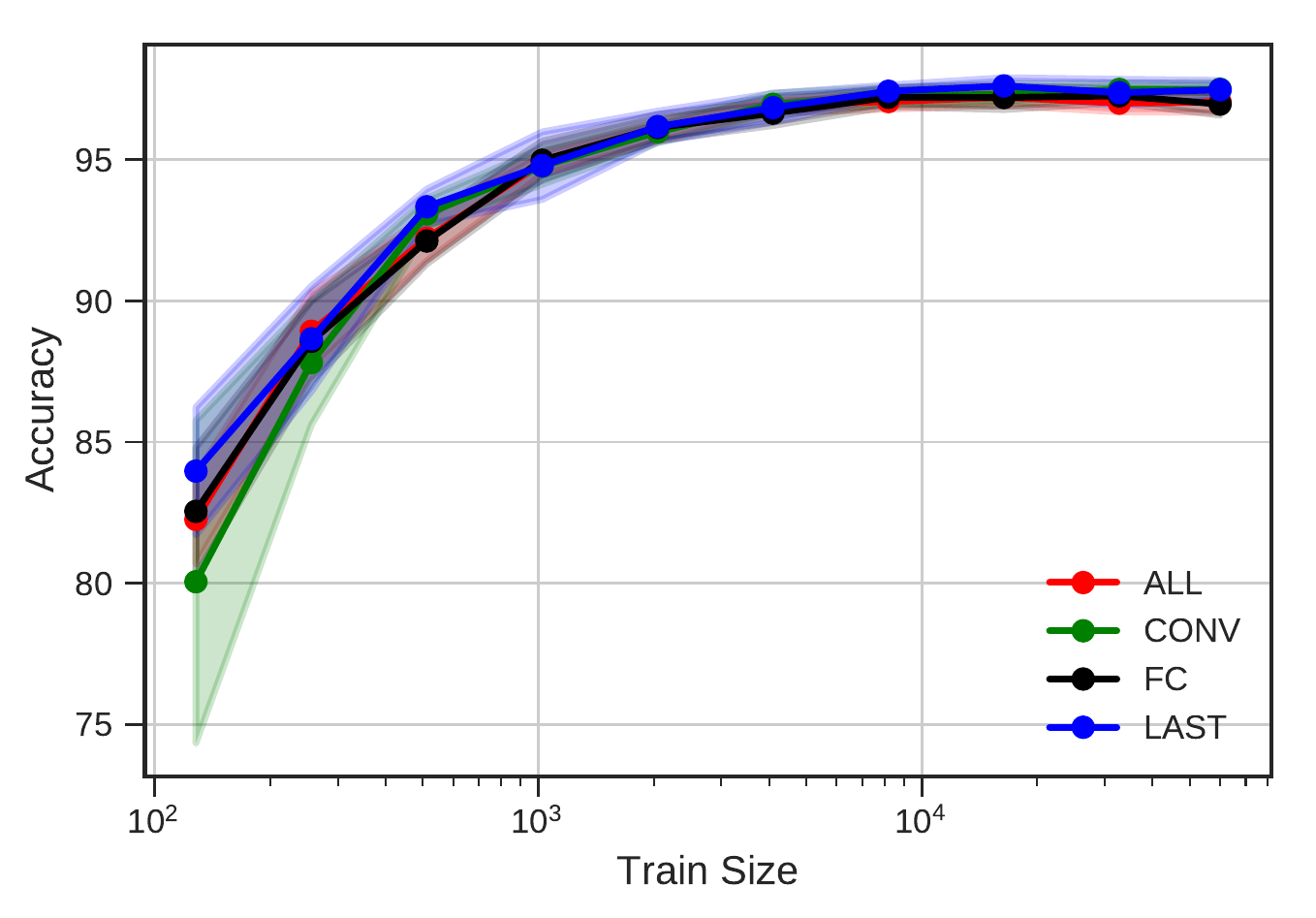}
    \caption{Applying AdaReg on different layers in neural networks for MNIST with batch size 2048.}
    \label{fig:mnist_diff_layer}
\end{figure}

\subsection{Covariance matrices in the prior} 
One byproduct that AdaReg brings to us is the learned row and column covariance matrices, which can be used in exploratory data analysis to understand the correlations between learned features and different output tasks. To this end, we visualize both the row and column covariance matrices in Fig.~\ref{fig:covprior}. The two covariance matrices on the first row correspond to the ones learned on a training set with 600 instances while the two on the second row are trained with the full dataset on MNIST. 
\begin{figure}[htb]
    \centering
    \begin{subfigure}[b]{0.4\linewidth}
        \centering
        \includegraphics[width=\linewidth]{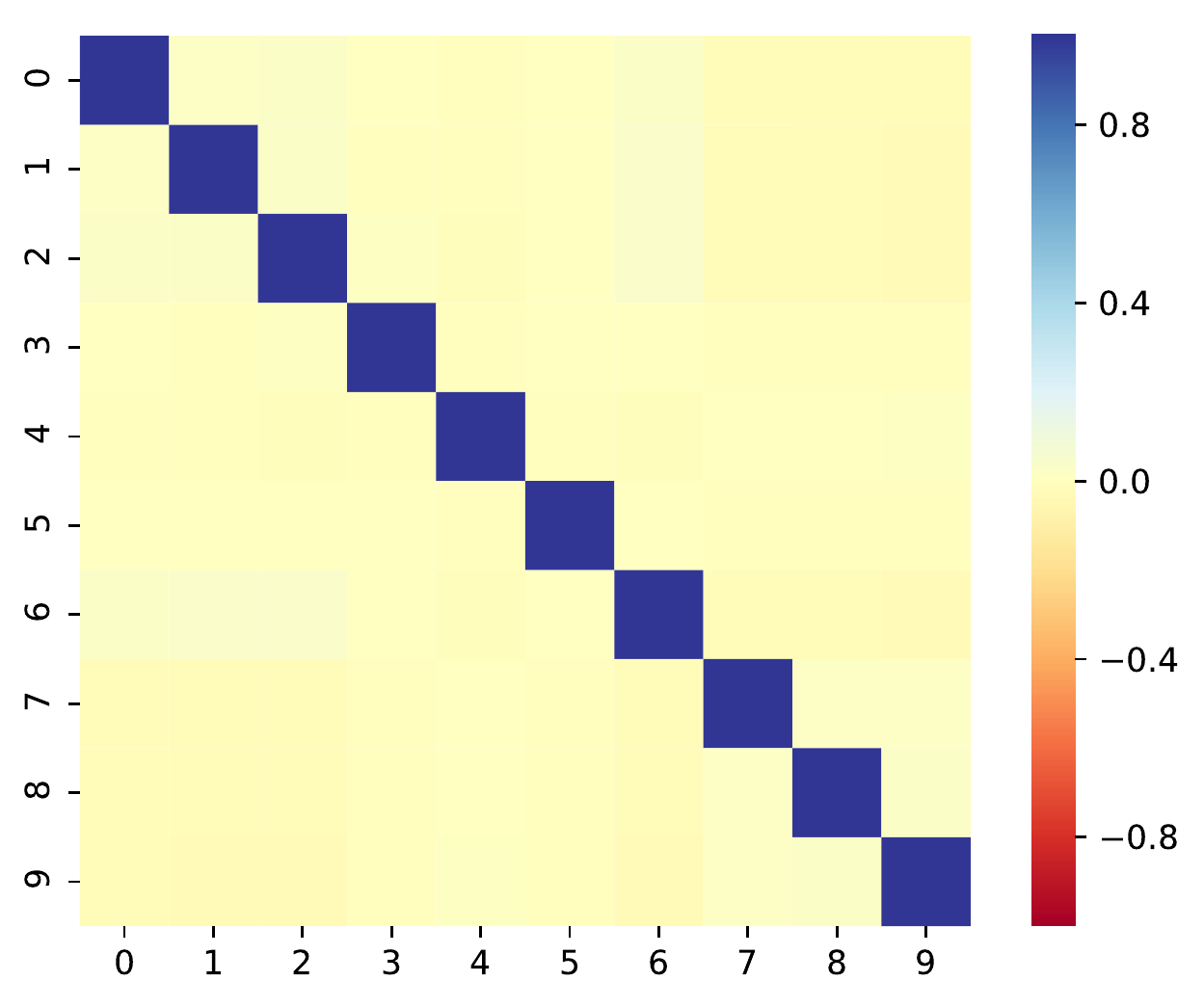}
        \caption{\small{Row Cov. matrix trained on 600 instances.}}
    \end{subfigure}
    ~
    \begin{subfigure}[b]{0.4\linewidth}
        \centering
        \includegraphics[width=\linewidth]{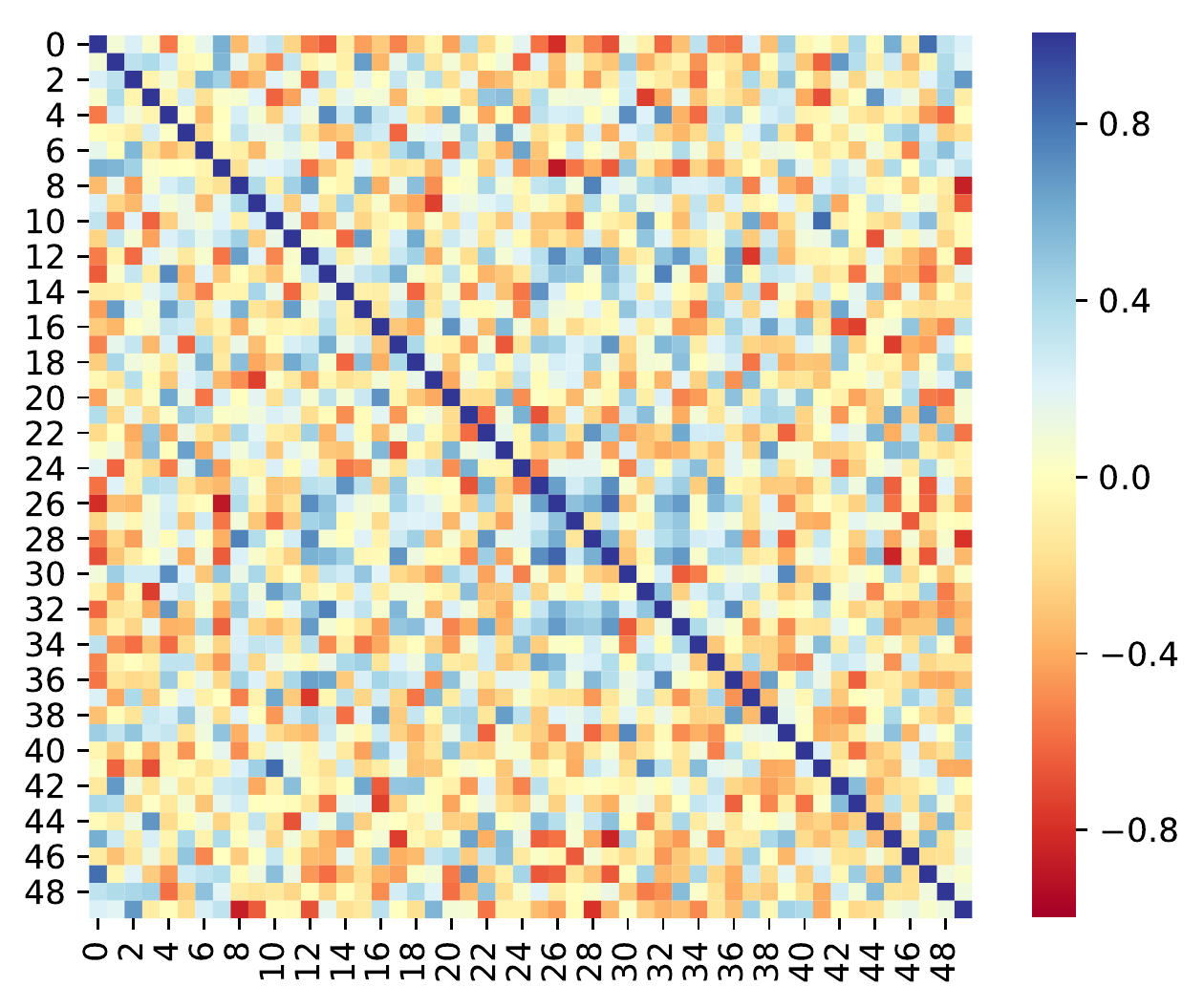}
        \caption{\small{Column Cov. matrix trained on 600 instances.}}
    \end{subfigure}
    ~
    \begin{subfigure}[b]{0.4\linewidth}
        \centering
        \includegraphics[width=\linewidth]{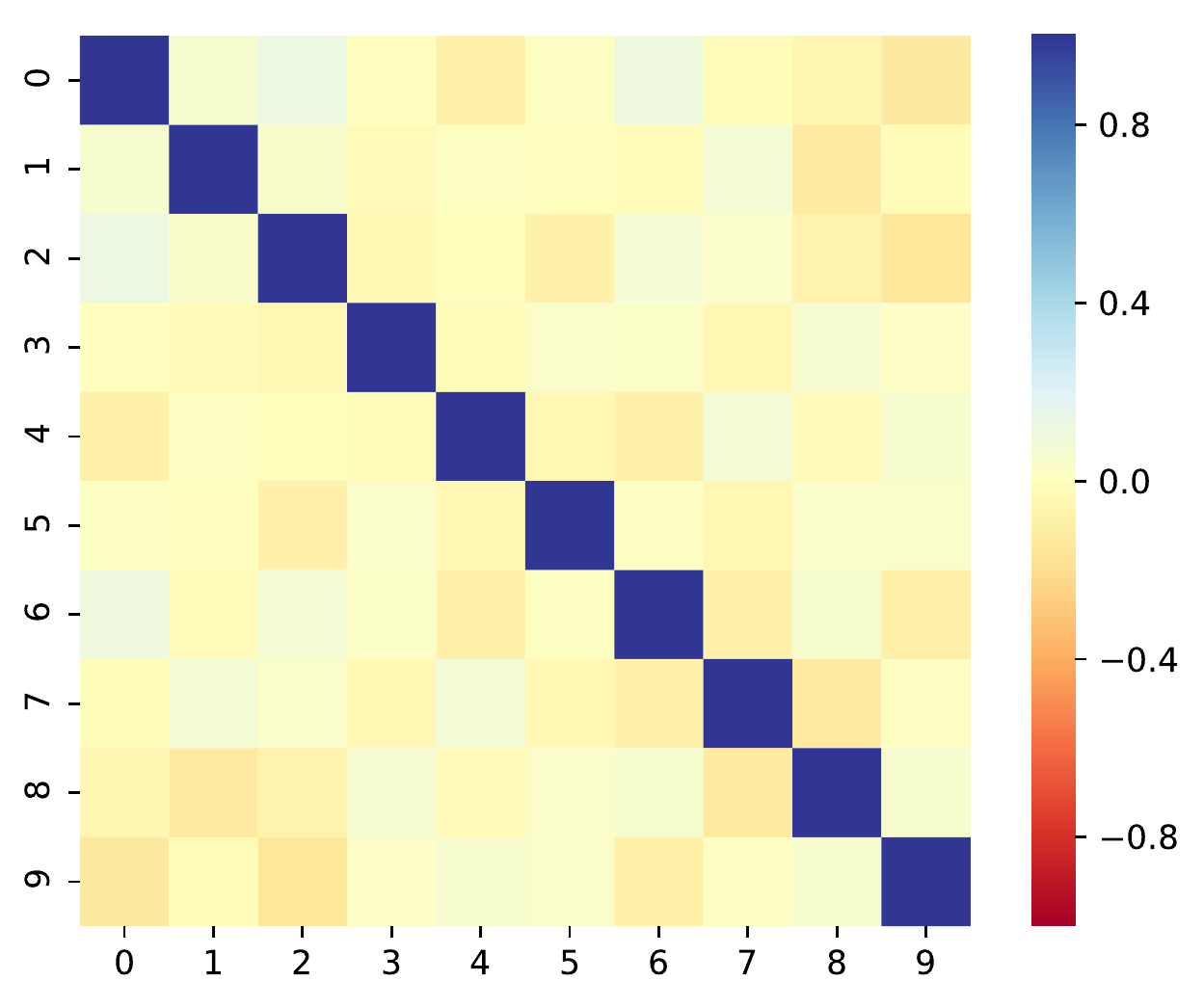}
        \caption{\small{Row Cov. matrix trained on 60,000 instances.}}
    \end{subfigure}
    ~
    \begin{subfigure}[b]{0.4\linewidth}
        \centering
        \includegraphics[width=\linewidth]{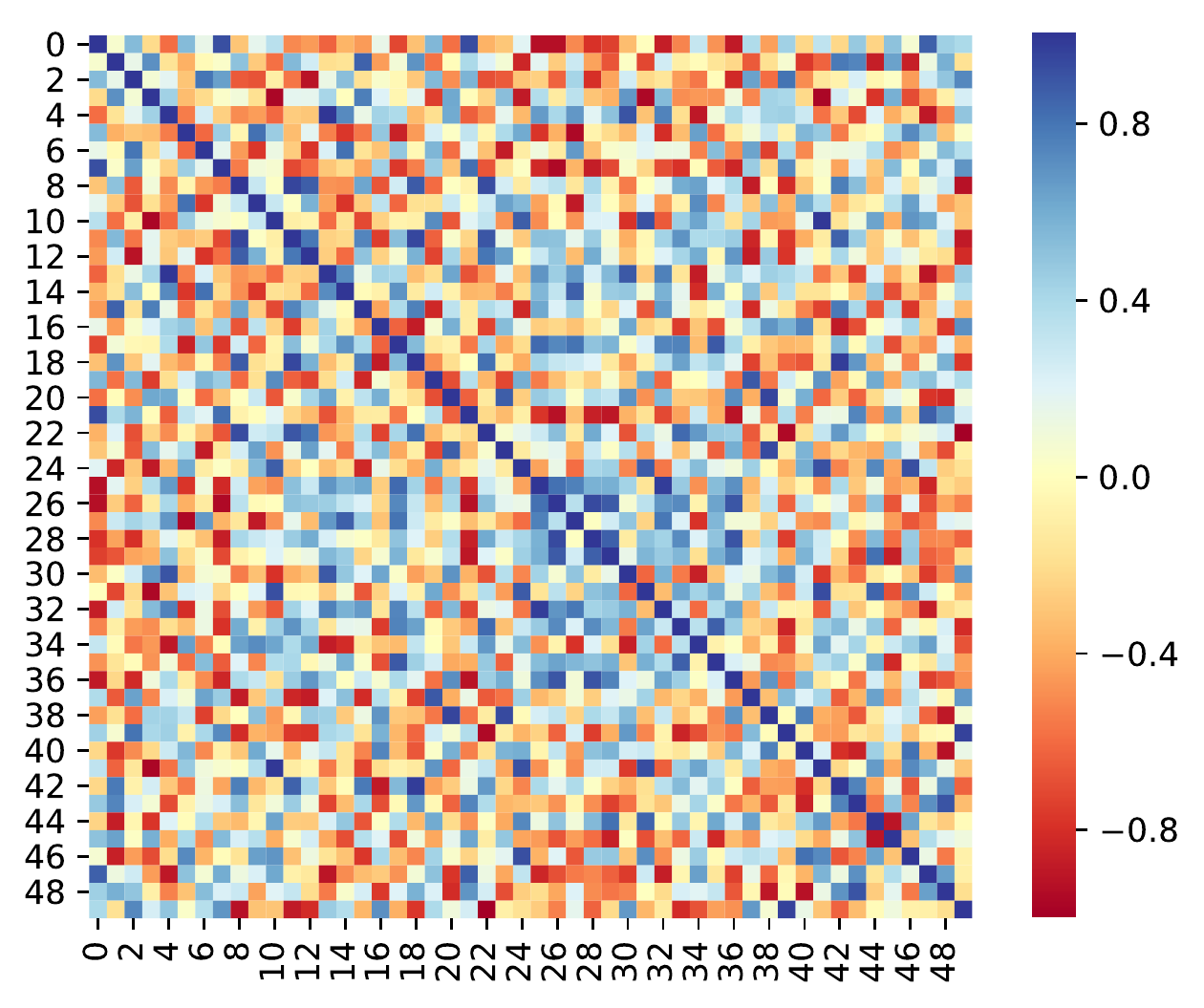}
        \caption{\small{Column Cov. matrix trained on 60,000 instances.}}
    \end{subfigure}
\caption{Recovered row covariance matrix $\covr$ and column covariance matrix $\covc$ in the prior distribution on MNIST.}
\label{fig:covprior}
\end{figure}

From Fig.~\ref{fig:covprior} we can make the following observations: the structure of both covariance matrices become more evident when trained with larger dataset, and this is consistent with the Bayesian principle because more data provide more evidence. Second, we observe in our experiments that the variances of both matrices are small. In fact, the variance of the row covariance matrix $\covr$ achieves the lower bound limit $u$ at convergence. Lastly, comparing the row covariance matrix $\covr$ in Fig.~\ref{fig:covprior} with the one computed from model weights in Fig.~\ref{fig:correlation}, we can see that both matrices exhibit the same correlation patterns, except that the one obtained from model weights are more evident, which is due to the fact that model weights are closer to data evidence than the row covariance matrix in the Bayesian hierarchy. 

On the other hand, the column covariance matrix in Fig.~\ref{fig:covprior} also exhibit rich correlations between the learned features, e.g., the neurons in the penultimate layer. Again, with more data, these patterns become more evident.



\begin{figure}[htb]
    \centering
    \includegraphics[width=\linewidth]{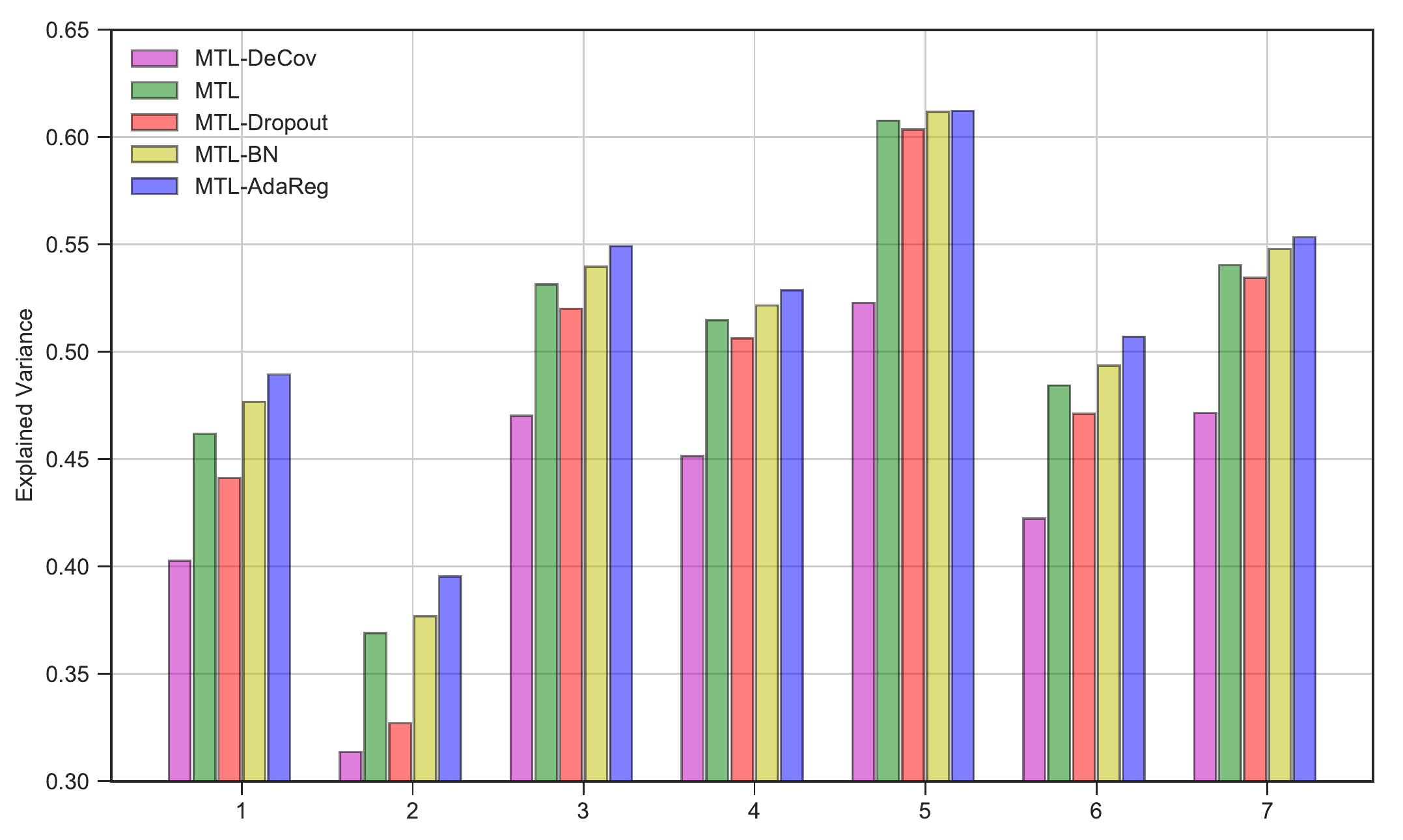}
    \caption{Explained variance of different methods on 7 regression tasks from the SARCOS dataset.}
    \label{fig:sarcos}
\end{figure}

\end{document}